\documentclass[preprint,12pt]{elsarticle}

\usepackage{amsmath,amssymb,amsfonts,amsthm}
\usepackage{booktabs}
\usepackage{graphicx}
\usepackage{subcaption}
\usepackage{multirow}
\usepackage{array}
\usepackage{xcolor}
\newtheorem{assumption}{Assumption}
\newtheorem{lemma}{Lemma}
\newtheorem{theorem}{Theorem}
\newtheorem{proposition}{Proposition}
\newtheorem{corollary}{Corollary}
\newtheorem{remark}{Remark}

\journal{Neurocomputing}

\begin{document}

\begin{frontmatter}

\title{Behavior-Induced Mirror-Prox Temporal-Difference Learning for Faster Off-Policy Prediction}

\author[aff1]{Xingguo Chen\corref{cor1}}
\ead{chenxg@njupt.edu.cn}
\author[aff1]{Yuchen Shen}
\author[aff1]{Shangdong Yang}
\author[aff1]{Chao Li}
\author[aff2]{Guang Yang}
\author[aff3]{Wenhao Wang}
\cortext[cor1]{Corresponding author.}

\affiliation[aff1]{organization={Nanjing University of Posts and Telecommunications},
            city={Nanjing},
            country={China}}

\affiliation[aff2]{organization={Department of Computer Science and Technology, Nanjing University},
            city={Nanjing},
            country={China}}

\affiliation[aff3]{organization={College of Electronic Countermeasure, National University of Defense Technology},
            city={Hefei},
            country={China}}

\begin{abstract}\sloppy
Gradient temporal-difference methods provide stable off-policy prediction with linear function approximation, but their practical performance is strongly affected by the geometry induced by the auxiliary-variable metric. Existing Mirror-Prox TD methods typically use the feature covariance metric, whereas hybrid TD methods suggest that behavior-policy transition information can provide a more informative update geometry. This paper proposes a behavior-induced Mirror-Prox temporal-difference method, called STHTD-MP, which replaces the covariance metric in the primal-dual saddle-point formulation with the symmetric part of the behavior-policy Bellman matrix. The method keeps a single learning rate for the primal and auxiliary variables and applies a Mirror-Prox prediction-correction step to the resulting hybrid saddle-point operator. We provide a formal convergence analysis for fixed-policy linear prediction under standard stochastic approximation assumptions: the behavior-induced metric is positive definite, the joint mean system is Hurwitz, boundedness follows from a Lyapunov argument, and the stochastic recursion converges by the ODE method. We further derive projected-oracle ergodic gap bounds and an exact mean-operator comparison with GTD2-MP based on the spectral radius of the deterministic Mirror-Prox error matrix. The analysis shows that STHTD-MP can have a smaller mean contraction factor than GTD2-MP when the behavior-induced metric improves the saddle-point geometry. Exact numerical mean-operator analysis on two-state, Random Walk, and Boyan Chain benchmarks supports this condition, while Baird's counterexample is identified as a singular boundary case where the strict assumptions fail. Experiments over 100 independent runs report means and standard deviations of two scalar summaries -- the steady-state AUC, defined as the time-average RMSVE over the last 50\% of each trajectory, and the final RMSVE -- and show that STHTD-MP is competitive with strong online TD baselines and that its empirical advantage is geometry- and horizon-dependent.
\end{abstract}

\begin{keyword}
Reinforcement learning \sep Off-policy prediction \sep Temporal-difference learning \sep Mirror-Prox \sep Saddle-point optimization \sep Behavior-induced metric
\end{keyword}

\end{frontmatter}

\section{Introduction}

Temporal-difference (TD) learning is a central mechanism for value prediction in reinforcement learning \cite{sutton1988learning,sutton2018reinforcement}. In off-policy prediction with function approximation, however, classical TD may diverge because sampling, bootstrapping, and approximation interact in an unstable way \cite{baird1995residual,tsitsiklis1997analysis}. This has led to gradient-TD algorithms such as GTD, GTD2, TDC, and their extensions, which restore stability by introducing an auxiliary variable and optimizing projected Bellman-error objectives \cite{sutton2008convergent,sutton2009fast,maei2010gq,szepesvari2010toward}.

The stability of gradient-TD methods does not by itself settle the question of fast learning. Two issues remain particularly important. First, many gradient-TD algorithms use separate primary and auxiliary learning rates, making relative step-size tuning nontrivial. Saddle-point and proximal views of TD learning provide a single-timescale alternative by rewriting policy evaluation as a primal-dual problem \cite{liu2015finite,liu2018proximal}. Second, even within the same saddle-point framework, the auxiliary metric determines the geometry of the mean operator and can strongly affect convergence speed. Standard GTD2-type saddle-point methods use the feature covariance metric $C=\mathbb{E}_\mu[\phi\phi^\top]$. In contrast, hybrid TD ideas suggest that the behavior-policy transition matrix contains useful geometric information that is not captured by $C$ alone \cite{hackman2012faster}.

This paper asks whether behavior-policy transition information can be used to obtain a faster Mirror-Prox TD method. We answer this question by replacing the covariance metric in the primal-dual TD objective with the symmetric behavior-induced Bellman metric $H=\operatorname{sym}(A_\mu)$ and applying a Mirror-Prox prediction-correction step \cite{nemirovski2004prox,juditsky2011solving}. The resulting method, STHTD-MP, can be viewed as a behavior-induced counterpart of GTD2-MP. Both methods use Mirror-Prox, but they shape the saddle-point operator with different metrics.

The paper makes the following contributions.

\begin{itemize}
  \item We derive STHTD-MP, a single-timescale behavior-induced Mirror-Prox TD method for off-policy linear prediction. The method uses the symmetric behavior-policy Bellman matrix as the auxiliary metric.
  \item We prove that the behavior-induced metric is positive definite under standard finite-state assumptions and that the resulting joint mean system is Hurwitz. Under standard stochastic approximation conditions, the underlying single-timescale hybrid recursion converges to the projected Bellman fixed point.
  \item We provide a convergence-speed analysis beyond big-$O$ notation. In addition to stochastic ergodic gap bounds, we derive an exact mean-operator comparison with GTD2-MP through the spectral radius of the deterministic Mirror-Prox error matrix.
  \item We compute the exact finite-state mean operators for four benchmarks. The numerical spectral analysis shows that STHTD-MP has a smaller deterministic mean contraction factor than GTD2-MP on the two-state, Random Walk, and Boyan Chain problems, while Baird's counterexample is a singular boundary case where the strict assumptions fail.
  \item We conduct stochastic experiments with stronger baselines, disjoint tuning/evaluation seeds, and 100 independent evaluation runs. The results show that STHTD-MP is competitive among online first-order TD methods and that the empirical benefit of the behavior-induced metric depends on the task geometry and evaluation horizon.
\end{itemize}

The resulting picture is geometry-dependent. The behavior-induced metric improves the Mirror-Prox mean operator when it yields a more favorable saddle-point geometry than the covariance metric. We make this condition explicit through an exact spectral comparison and show through numerical mean-operator analysis and stochastic experiments that it holds in several standard prediction benchmarks.

\section{Background}

\subsection{Notation}

We consider a discounted Markov decision process
$(\mathcal{S},\mathcal{A},P,r,\gamma)$, where $\mathcal{S}$ is a finite state space, $\mathcal{A}$ is a finite action space, $P(s'|s,a)$ is the transition kernel, $r(s,a,s')$ is the reward, and $\gamma\in(0,1]$ is the discount factor. The target policy is denoted by $\pi$ and the behavior policy by $\mu$. The data are generated by $\mu$, while the value function of $\pi$ is to be estimated. Unless otherwise stated, all expectations are taken with respect to the stationary trajectory induced by the behavior policy $\mu$, with importance sampling used to correct the target-policy Bellman term.

For a policy $\nu\in\{\pi,\mu\}$, let $P_\nu\in\mathbb{R}^{|\mathcal{S}|\times |\mathcal{S}|}$ denote the state transition matrix induced by $\nu$:
\begin{equation}
[P_\nu]_{ss'}=\sum_{a\in\mathcal{A}}\nu(a|s)P(s'|s,a).
\end{equation}
Let $d_\mu$ be the stationary distribution of $P_\mu$, and let
\begin{equation}
D_\mu=\operatorname{diag}(d_\mu)
\end{equation}
be the corresponding diagonal weighting matrix. We use a linear value approximation
\begin{equation}
v_\theta(s)=\theta^\top\phi(s),
\end{equation}
where $\phi(s)\in\mathbb{R}^d$ is the feature vector and $\theta\in\mathbb{R}^d$ is the primary parameter. The feature matrix is
\begin{equation}
\Phi=\begin{pmatrix}\phi(s_1)^\top\\\cdots\\\phi(s_{|\mathcal{S}|})^\top\end{pmatrix}
\in\mathbb{R}^{|\mathcal{S}|\times d}.
\end{equation}
For compactness, we write $\phi_t=\phi(s_t)$ and $\phi_{t+1}=\phi(s_{t+1})$. When the next state is sampled according to the target-policy transition we write $\phi_{t+1}^\pi$; when it is sampled according to the behavior-policy transition we write $\phi_{t+1}^\mu$. In the sample-based off-policy updates below, $\phi_{t+1}$ is the observed next-state feature under $\mu$, and $\rho_t$ corrects the target-policy Bellman term.

The importance ratio is
\begin{equation}
\rho_t=\frac{\pi(a_t|s_t)}{\mu(a_t|s_t)}.
\end{equation}
The off-policy TD error is defined as
\begin{equation}
\delta_t=r_t+\gamma\theta_t^\top\phi_{t+1}-\theta_t^\top\phi_t.
\end{equation}

The key matrices used in the paper are
\begin{equation}
A_\pi=\mathbb{E}\left[\rho_t\phi_t(\phi_t-\gamma\phi_{t+1})^\top\right],\quad
b=\mathbb{E}\left[\rho_t r_t\phi_t\right].
\end{equation}
Equivalently, in matrix form,
\begin{equation}
A_\pi=\Phi^\top D_\mu(I-\gamma P_\pi)\Phi,
\quad
b=\Phi^\top D_\mu r_\pi,
\end{equation}
where $r_\pi(s)=\mathbb{E}_{a\sim\pi(\cdot|s),s'\sim P(\cdot|s,a)}[r(s,a,s')]$. The projected Bellman equation is $A_\pi\theta=b$. The auxiliary variable is denoted by $y\in\mathbb{R}^d$, and the joint variable used in the theoretical analysis is
\begin{equation}
z=\begin{pmatrix}\theta\\y\end{pmatrix}\in\mathbb{R}^{2d}.
\end{equation}

\subsection{Off-policy TD and saddle-point formulation}

Gradient-TD methods stabilize off-policy learning by introducing an auxiliary variable, but standard variants typically involve separate learning rates for the value and auxiliary variables \cite{sutton2008convergent,sutton2009fast}. Recent finite-sample and stochastic approximation studies further clarify the role of coupled recursions, two-timescale dynamics, and Markovian noise \cite{dalal2020finite,kaledin2020finite,doan2021finite}.

The saddle-point view starts from the objective
\begin{equation}
\min_{\theta}\max_y L(\theta,y)
=\langle b-A_\pi\theta,y\rangle-\frac{1}{2}\|y\|_M^2,
\label{eq:saddle_objective}
\end{equation}
where $M$ is a positive definite metric matrix. If $M$ is chosen properly, the solution still satisfies $A_\pi\theta=b$ while the optimization geometry changes. Proximal gradient TD and related methods exploit this formulation to obtain stable single-timescale updates \cite{liu2015finite,liu2018proximal}.

\section{Single-Timescale Hybrid TD}

Hybrid TD methods use behavior-policy information to alter the TD update direction. In this work we define the behavior-policy Bellman matrix
\begin{equation}
A_\mu=\mathbb{E}\left[\phi_t(\phi_t-\gamma\phi_{t+1}^\mu)^\top\right],
\end{equation}
where $\phi_{t+1}^\mu$ denotes the next-state feature induced by the behavior policy. Since $A_\mu$ need not be symmetric, the auxiliary metric is taken as
\begin{equation}
H=\frac{1}{2}(A_\mu+A_\mu^\top).
\end{equation}
The mean update of STHTD is
\begin{equation}
\begin{aligned}
y_{t+1}&=y_t+\alpha_t(b-A_\pi\theta_t-Hy_t),\\
\theta_{t+1}&=\theta_t+\alpha_t A_\pi^\top y_t.
\end{aligned}
\label{eq:mean_sthtd}
\end{equation}
Its sample-based off-policy form is
\begin{equation}
\begin{aligned}
\theta_{t+1}&=\theta_t+
\alpha_t\rho_t(\phi_t-\gamma\phi_{t+1})\phi_t^\top y_t,\\
y_{t+1}&=y_t+
\alpha_t\Big[(\rho_t\delta_t-\phi_t^\top y_t+\tfrac{1}{2}\gamma\phi_{t+1}^\top y_t)\phi_t
+\tfrac{1}{2}\gamma\phi_t^\top y_t\phi_{t+1}\Big].
\end{aligned}
\label{eq:sthtd}
\end{equation}
Compared with a covariance-metric auxiliary update, the STHTD auxiliary update contains additional hybrid terms involving $\phi_{t+1}$.

\section{Mirror-Prox Correction}

The saddle-point structure in Eq.~\eqref{eq:saddle_objective} induces a monotone operator
\begin{equation}
F(z)=
\begin{pmatrix}
-A_\pi^\top y\\
A_\pi\theta+Hy-b
\end{pmatrix},\quad z=(\theta,y).
\end{equation}
Mirror-Prox first evaluates the operator at the current point to form an intermediate prediction, and then uses the predicted point to correct the final update \cite{nemirovski2004prox,juditsky2011solving}. Applying this idea to Eq.~\eqref{eq:sthtd} gives the following update. Let
\begin{equation}
\begin{aligned}
\theta_t^m&=\theta_t+
\alpha_t\rho_t(\phi_t-\gamma\phi_{t+1})\phi_t^\top y_t,\\
y_t^m&=y_t+
\alpha_t\Big[(\rho_t\delta_t-\phi_t^\top y_t+\tfrac{1}{2}\gamma\phi_{t+1}^\top y_t)\phi_t
+\tfrac{1}{2}\gamma\phi_t^\top y_t\phi_{t+1}\Big].
\end{aligned}
\end{equation}
With the intermediate TD error
\begin{equation}
\delta_t^m=r_t+\gamma(\theta_t^m)^\top\phi_{t+1}-(\theta_t^m)^\top\phi_t,
\end{equation}
STHTD-MP performs
\begin{equation}
\begin{aligned}
\theta_{t+1}&=\theta_t+
\alpha_t\rho_t(\phi_t-\gamma\phi_{t+1})\phi_t^\top y_t^m,\\
y_{t+1}&=y_t+
\alpha_t\Big[(\rho_t\delta_t^m-\phi_t^\top y_t^m+\tfrac{1}{2}\gamma\phi_{t+1}^\top y_t^m)\phi_t
+\tfrac{1}{2}\gamma\phi_t^\top y_t^m\phi_{t+1}\Big].
\end{aligned}
\label{eq:sthtd_mp}
\end{equation}
The method uses one learning rate and roughly doubles the per-step first-order update cost, similarly to other extra-gradient methods.

\section{Theoretical Analysis}

This section formalizes the convergence argument for fixed-policy linear prediction. The setting is a fixed target policy with linear features, where the behavior-induced metric and the corresponding saddle-point mean dynamics can be analyzed explicitly.

\subsection{Assumptions and mean dynamics}

Let $\Phi\in\mathbb{R}^{|\mathcal{S}|\times d}$ be the feature matrix, $D_\mu=\operatorname{diag}(d_\mu)$ the stationary distribution matrix of the behavior policy, and $P_\mu$ the state transition matrix under $\mu$. Recall
\begin{equation}
A_\mu=\Phi^\top D_\mu(I-\gamma P_\mu)\Phi,
\quad
H=\frac{1}{2}(A_\mu+A_\mu^\top).
\end{equation}
The mean STHTD recursion in Eq.~\eqref{eq:mean_sthtd} is
\begin{equation}
z_{t+1}=z_t+\alpha_t(\mathcal{G}z_t+h),
\quad
z_t=\begin{pmatrix}\theta_t\\y_t\end{pmatrix},
\label{eq:joint_recursion}
\end{equation}
with
\begin{equation}
\mathcal{G}=\begin{pmatrix}0&A_\pi^\top\\-A_\pi&-H\end{pmatrix},
\quad
h=\begin{pmatrix}0\\b\end{pmatrix}.
\label{eq:joint_matrix}
\end{equation}

\begin{assumption}\label{ass:basic}
The following conditions hold: (i) $0<\gamma<1$; (ii) the Markov chain induced by $\mu$ is irreducible and has stationary distribution $d_\mu$ with $d_\mu(s)>0$ for all $s$; (iii) $d_\mu^\top P_\mu=d_\mu^\top$; (iv) $\Phi$ has full column rank; (v) $A_\pi$ is nonsingular; and (vi) features and rewards are uniformly bounded.
\end{assumption}

\begin{lemma}[Positive definiteness of the hybrid metric]\label{lem:h_spd}
Under Assumption~\ref{ass:basic}, $H=(A_\mu+A_\mu^\top)/2$ is positive definite.
\end{lemma}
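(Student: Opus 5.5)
Since $H$ is the symmetric part of $A_\mu$, it suffices to show $x^\top A_\mu x>0$ for every nonzero $x\in\mathbb{R}^d$, because $x^\top H x = x^\top A_\mu x$. Writing $v=\Phi x\in\mathbb{R}^{|\mathcal{S}|}$, Assumption~\ref{ass:basic}(iv) gives $v\neq 0$ whenever $x\neq 0$. The matrix form then yields
\begin{equation*}
x^\top A_\mu x = v^\top D_\mu v - \gamma\, v^\top D_\mu P_\mu v = \|v\|_{D_\mu}^2 - \gamma\,\langle v, P_\mu v\rangle_{D_\mu},
\end{equation*}
where $\|v\|_{D_\mu}^2=\sum_s d_\mu(s)v(s)^2$. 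This norm is a genuine norm because $d_\mu(s)>0$ for all $s$ by Assumption~\ref{ass:basic}(ii).

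The key step is the standard nonexpansiveness bound $\|P_\mu v\|_{D_\mu}\le \|v\|_{D_\mu}$ (cf.\ \cite{tsitsiklis1997analysis}). Each row of $P_\mu$ is a probability vector, so Jensen's inequality gives $([P_\mu v](s))^2\le \sum_{s'}[P_\mu]_{ss'}v(s')^2$. Weighting by $d_\mu(s)$, summing over $s$, and using stationarity $d_\mu^\top P_\mu=d_\mu^\top$ from Assumption~\ref{ass:basic}(iii) gives $\|P_\mu v\|_{D_\mu}^2\le \sum_{s'}d_\mu(s')v(s')^2=\|v\|_{D_\mu}^2$.

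Cauchy--Schwarz in the $D_\mu$ inner product then gives $\langle v,P_\mu v\rangle_{D_\mu}\le \|v\|_{D_\mu}\|P_\mu v\|_{D_\mu}\le \|v\|_{D_\mu}^2$. Hence
\begin{equation*}
x^\top H x \ge (1-\gamma)\|v\|_{D_\mu}^2 = (1-\gamma)\, x^\top \Phi^\top D_\mu \Phi\, x > 0,
\end{equation*}
since $\gamma<1$ by Assumption~\ref{ass:basic}(i) and $v\ne 0$. This also yields the explicit bound $\lambda_{\min}(H)\ge(1-\gamma)\lambda_{\min}(\Phi^\top D_\mu\Phi)>0$, which should be useful later for the Hurwitz and Lyapunov arguments.

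The only delicate step is the nonexpansiveness of $P_\mu$ in the $D_\mu$-norm. This is exactly where we need the weighting distribution to be the stationary distribution of the \emph{same} chain $P_\mu$. That requirement is the reason $H$ is built from $A_\mu$ and not from $A_\pi$, which in the off-policy case need not be positive definite. Strict positivity of $d_\mu$ and the full column rank of $\Phi$ then make the final inequality strict. Irreducibility is used only to guarantee that $d_\mu(s)>0$.
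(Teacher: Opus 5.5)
Your proof is correct and follows essentially the same route as the paper: you reduce to $x^\top A_\mu x$ with $v=\Phi x$, bound $\langle v,P_\mu v\rangle_{D_\mu}$ by Cauchy--Schwarz together with the Jensen/stationarity nonexpansiveness of $P_\mu$ in the $D_\mu$-norm, and conclude $x^\top Hx\ge(1-\gamma)\|v\|_{D_\mu}^2>0$. The explicit bound $\lambda_{\min}(H)\ge(1-\gamma)\lambda_{\min}(\Phi^\top D_\mu\Phi)$ that you add is a valid quantitative byproduct of the same argument.
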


\begin{proof}
For any nonzero $x\in\mathbb{R}^d$, let $v=\Phi x$. Since $\Phi$ has full column rank, $v\neq0$. We have
\begin{equation}
x^\top A_\mu x
=v^\top D_\mu(I-\gamma P_\mu)v
=\|v\|_{D_\mu}^2-\gamma\langle v,P_\mu v\rangle_{D_\mu}.
\end{equation}
By Cauchy--Schwarz,
\begin{equation}
\langle v,P_\mu v\rangle_{D_\mu}
\leq \|v\|_{D_\mu}\|P_\mu v\|_{D_\mu}.
\end{equation}
Jensen's inequality and stationarity of $d_\mu$ imply
\begin{equation}
\|P_\mu v\|_{D_\mu}^2
=\sum_s d_\mu(s)\left(\sum_{s'}P_\mu(s,s')v(s')\right)^2
\leq\sum_{s'}d_\mu(s')v(s')^2
=\|v\|_{D_\mu}^2.
\end{equation}
Therefore
\begin{equation}
x^\top A_\mu x\geq (1-\gamma)\|v\|_{D_\mu}^2>0.
\end{equation}
Since $x^\top Hx=x^\top A_\mu x$ for real $x$, $H$ is positive definite.
\end{proof}

\begin{theorem}[Hurwitz stability of the joint mean system]\label{thm:hurwitz}
Under Assumption~\ref{ass:basic}, the joint matrix $\mathcal{G}$ in Eq.~\eqref{eq:joint_matrix} is Hurwitz. Consequently, the ODE
\begin{equation}
\dot{z}(t)=\mathcal{G}z(t)+h
\label{eq:ode}
\end{equation}
has a unique globally exponentially stable equilibrium
\begin{equation}
z^*=\begin{pmatrix}\theta^*\\y^*\end{pmatrix}
=\begin{pmatrix}A_\pi^{-1}b\\0\end{pmatrix}.
\end{equation}
\end{theorem}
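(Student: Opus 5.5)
The plan is a direct eigenvalue argument for $\mathcal{G}$, using the block structure together with Lemma~\ref{lem:h_spd}. Let $\lambda\in\mathbb{C}$ be an eigenvalue of $\mathcal{G}$ with eigenvector $(u,w)\neq 0$, $u,w\in\mathbb{C}^d$. The eigen-equations are
\begin{equation*}
A_\pi^\top w=\lambda u,\qquad -A_\pi u-Hw=\lambda w .
\end{equation*}
Multiply the first equation on the left by $u^*$ and the second by $w^*$, then take the conjugate of the first. This gives $w^*A_\pi u=\bar\lambda\|u\|^2$ and $-w^*A_\pi u-w^*Hw=\lambda\|w\|^2$. Adding the two cancels the skew coupling term and yields
\begin{equation*}
\bar\lambda\|u\|^2+\lambda\|w\|^2=-w^*Hw .
\end{equation*}
Taking real parts gives $\operatorname{Re}\lambda\,(\|u\|^2+\|w\|^2)=-w^*Hw$. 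Since $H$ is real symmetric positive definite by Lemma~\ref{lem:h_spd}, we have $w^*Hw\ge 0$, with equality only when $w=0$. So $\operatorname{Re}\lambda\le 0$, and strict inequality follows once we exclude $w=0$.

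Excluding $w=0$ is the one place where Assumption~\ref{ass:basic}(v) enters. If $w=0$, the first equation gives $\lambda u=0$ and the second gives $A_\pi u=0$. Nonsingularity of $A_\pi$ then forces $u=0$, contradicting $(u,w)\neq 0$. Hence $w\neq 0$ and $\operatorname{Re}\lambda<0$ for every eigenvalue, so $\mathcal{G}$ is Hurwitz. The argument needs no condition on $\lambda$ itself; the cancellation of the off-diagonal blocks handles everything. The only point to watch is using Hermitian forms correctly, i.e.\ $w^*Hw=\operatorname{Re}(w)^\top H\operatorname{Re}(w)+\operatorname{Im}(w)^\top H\operatorname{Im}(w)>0$ for complex $w\neq 0$.

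For the equilibrium, a Hurwitz matrix is nonsingular, so $\mathcal{G}z+h=0$ has a unique solution. I will check directly that $z^*=(A_\pi^{-1}b,0)$ solves it. The first block row is $A_\pi^\top\cdot 0=0$, and the second is $-A_\pi A_\pi^{-1}b-H\cdot 0+b=0$. Setting $e(t)=z(t)-z^*$ gives $\dot e=\mathcal{G}e$, so $e(t)=e^{\mathcal{G}t}e(0)$. Standard bounds for Hurwitz matrices then give $\|e(t)\|\le c\,e^{-\kappa t}\|e(0)\|$ for any $0<\kappa<\min_i|\operatorname{Re}\lambda_i|$. Equivalently, a Lyapunov matrix $Q\succ 0$ solving $\mathcal{G}^\top Q+Q\mathcal{G}=-I$ exists, and it yields global exponential stability.
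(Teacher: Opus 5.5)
Your proof is correct and is essentially the paper's argument: your block computation $\operatorname{Re}\lambda\,(\|u\|^2+\|w\|^2)=-w^*Hw$ is the paper's identity from $\mathcal{G}+\mathcal{G}^\top=\operatorname{diag}(0,-2H)$ written out componentwise, and you rule out $w=0$ through nonsingularity of $A_\pi$ exactly as the paper does. The differences are small tidy-ups of what the paper leaves implicit: you get $A_\pi u=0$ directly from the second block equation instead of splitting on whether $\lambda=0$, and you spell out both the uniqueness of $z^*$ and the exponential decay of $e^{\mathcal{G}t}$.
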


\begin{proof}
Let $\lambda$ be an eigenvalue of $\mathcal{G}$ with eigenvector $w=(u,v)\neq0$. Since
\begin{equation}
\mathcal{G}+\mathcal{G}^\top=\begin{pmatrix}0&0\\0&-2H\end{pmatrix},
\end{equation}
we obtain
\begin{equation}
2\operatorname{Re}(\lambda)\|w\|^2
=w^*(\mathcal{G}+\mathcal{G}^\top)w
=-2v^*Hv\leq0.
\end{equation}
Thus $\operatorname{Re}(\lambda)\leq0$. Suppose $\operatorname{Re}(\lambda)=0$. Since $H$ is positive definite by Lemma~\ref{lem:h_spd}, $v^*Hv=0$ implies $v=0$. The eigenvalue equations are
\begin{equation}
A_\pi^\top v=\lambda u,
\quad
-A_\pi u-Hv=\lambda v.
\end{equation}
With $v=0$, the first equation gives $\lambda u=0$. If $\lambda\neq0$, then $u=0$, contradicting $w\neq0$. If $\lambda=0$, the second equation gives $A_\pi u=0$, and nonsingularity of $A_\pi$ again gives $u=0$, a contradiction. Hence $\operatorname{Re}(\lambda)<0$ for every eigenvalue, so $\mathcal{G}$ is Hurwitz. The equilibrium follows from $A_\pi^\top y^*=0$ and $-A_\pi\theta^*-Hy^*+b=0$.
\end{proof}

\begin{assumption}\label{ass:sa}
The stochastic STHTD recursion can be written as
\begin{equation}
z_{t+1}=z_t+\alpha_t(\mathcal{G}z_t+h+\xi_{t+1}),
\end{equation}
where $\{\xi_{t+1}\}$ is a martingale-difference noise sequence with respect to the natural filtration $\{\mathcal{F}_t\}$, i.e., $\mathbb{E}[\xi_{t+1}|\mathcal{F}_t]=0$, and there exists $c_\xi>0$ such that
\begin{equation}
\mathbb{E}[\|\xi_{t+1}\|^2|\mathcal{F}_t]\leq c_\xi(1+\|z_t\|^2).
\end{equation}
The step sizes satisfy $\sum_t\alpha_t=\infty$ and $\sum_t\alpha_t^2<\infty$.
\end{assumption}

\begin{lemma}[Boundedness of the linear stochastic recursion]\label{lem:bounded}
Under Assumptions~\ref{ass:basic} and~\ref{ass:sa}, the iterates of the STHTD recursion are almost surely bounded.
\end{lemma}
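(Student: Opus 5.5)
We prove boundedness with a quadratic Lyapunov function built from the Hurwitz property of Theorem~\ref{thm:hurwitz}, followed by a Robbins--Siegmund supermartingale argument.

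\textbf{Step 1: shift and Lyapunov matrix.} Let $\tilde z_t=z_t-z^*$. Since $\mathcal{G}z^*+h=0$, the recursion becomes
\[
\tilde z_{t+1}=\tilde z_t+\alpha_t(\mathcal{G}\tilde z_t+\xi_{t+1}).
\]
By Theorem~\ref{thm:hurwitz}, $\mathcal{G}$ is Hurwitz. Hence there is a unique symmetric positive definite $Q$ solving the Lyapunov equation $\mathcal{G}^\top Q+Q\mathcal{G}=-I$. Set $V(z)=\tilde z^\top Q\tilde z$, and note $\lambda_{\min}(Q)\|\tilde z\|^2\le V\le\lambda_{\max}(Q)\|\tilde z\|^2$.

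\textbf{Step 2: one-step drift.} Expand $V(z_{t+1})$ and take conditional expectations. The martingale-difference property kills the cross term $2\alpha_t\tilde z_t^\top Q\xi_{t+1}$ and the term $2\alpha_t^2(\mathcal{G}\tilde z_t)^\top Q\xi_{t+1}$. This leaves
\[
\mathbb{E}[V_{t+1}\mid\mathcal{F}_t]\le V_t-\alpha_t\|\tilde z_t\|^2+\alpha_t^2\big(\|Q\|\|\mathcal{G}\|^2\|\tilde z_t\|^2+\|Q\|c_\xi(1+\|z_t\|^2)\big).
\]
Using $\|z_t\|^2\le2\|\tilde z_t\|^2+2\|z^*\|^2$ and the norm equivalence, we get
\[
\mathbb{E}[V_{t+1}\mid\mathcal{F}_t]\le(1+c_1\alpha_t^2)V_t-\frac{\alpha_t}{\lambda_{\max}(Q)}V_t+c_2\alpha_t^2
\]
for constants $c_1,c_2$ depending only on $Q$, $\mathcal{G}$, $c_\xi$ and $z^*$.

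\textbf{Step 3: Robbins--Siegmund.} Because $\sum_t\alpha_t^2<\infty$, the Robbins--Siegmund theorem applies with $a_t=c_1\alpha_t^2$, $b_t=c_2\alpha_t^2$ and nonnegative decrement $\alpha_tV_t/\lambda_{\max}(Q)$. It gives that $V_t$ converges almost surely to a finite random limit, and also $\sum_t\alpha_tV_t<\infty$ a.s. Thus $\sup_tV_t<\infty$ a.s. By positive definiteness of $Q$, $\sup_t\|z_t\|<\infty$ a.s.

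\textbf{Alternative route.} One could instead invoke the Borkar--Meyn stability criterion. The scaled limit ODE $\dot z=\mathcal{G}z$ has the origin as its globally asymptotically stable equilibrium by Theorem~\ref{thm:hurwitz}, and the noise condition matches Assumption~\ref{ass:sa}. The direct Lyapunov argument is self-contained, so we use it.

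\textbf{Main obstacle.} The expected difficulty is bookkeeping in Step 2. The $\alpha_t^2$ noise term grows like $\|z_t\|^2$, so it must be absorbed multiplicatively into $(1+c_1\alpha_t^2)V_t$ rather than bounded by a constant. Robbins--Siegmund handles exactly this, so square-summability is the key hypothesis. Note also that the argument needs only the mean-zero and conditional second-moment conditions of Assumption~\ref{ass:sa}, not bounded noise.
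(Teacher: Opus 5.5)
Your proposal is correct and follows essentially the same route as the paper: you shift to $z_t-z^*$, take the Lyapunov solution of $\mathcal{G}^\top P+P\mathcal{G}=-I$, compute the conditional drift of the quadratic $V$ (with the martingale cross terms vanishing), and conclude via Robbins--Siegmund. The only cosmetic difference is how the $\alpha_t^2\|\tilde z_t\|^2$ term is handled: you keep it as the multiplicative factor $(1+c_1\alpha_t^2)$ in the general Robbins--Siegmund form, valid for all $t$, whereas the paper absorbs it into the decrement only for sufficiently large $t$, which tacitly leaves the finitely many early steps to be handled separately, so your version is if anything slightly cleaner.
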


\begin{proof}
By Theorem~\ref{thm:hurwitz}, $\mathcal{G}$ is Hurwitz. Hence for any positive definite matrix $Q$, the Lyapunov equation
\begin{equation}
\mathcal{G}^\top P+P\mathcal{G}=-Q
\label{eq:lyapunov_equation}
\end{equation}
has a unique positive definite solution $P$. Choose $Q=I$ and define the shifted error $e_t=z_t-z^*$, where $z^*$ is the unique equilibrium satisfying $\mathcal{G}z^*+h=0$. Then
\begin{equation}
e_{t+1}=e_t+\alpha_t(\mathcal{G}e_t+\xi_{t+1}).
\end{equation}
Let $V(e)=e^\top Pe$. Expanding $V(e_{t+1})$ gives
\begin{align}
V(e_{t+1})
&=V(e_t)+2\alpha_t e_t^\top P\mathcal{G}e_t
+2\alpha_t e_t^\top P\xi_{t+1}\nonumber\\
&\quad+\alpha_t^2(\mathcal{G}e_t+\xi_{t+1})^\top
P(\mathcal{G}e_t+\xi_{t+1}).
\end{align}
Taking conditional expectation, using $\mathbb{E}[\xi_{t+1}\mid\mathcal{F}_t]=0$, and expanding the quadratic term gives
\begin{align}
\mathbb{E}[V(e_{t+1})\mid\mathcal{F}_t]
&=V(e_t)+\alpha_t e_t^\top(P\mathcal{G}+\mathcal{G}^\top P)e_t\nonumber\\
&\quad+\alpha_t^2\,e_t^\top\mathcal{G}^\top P\mathcal{G}\,e_t
+\alpha_t^2\,\mathbb{E}[\xi_{t+1}^\top P\xi_{t+1}\mid\mathcal{F}_t]\nonumber\\
&\quad+2\alpha_t^2\,e_t^\top\mathcal{G}^\top P\,\mathbb{E}[\xi_{t+1}\mid\mathcal{F}_t]\\
&=V(e_t)+\alpha_t e_t^\top(P\mathcal{G}+\mathcal{G}^\top P)e_t\nonumber\\
&\quad+\alpha_t^2\,e_t^\top\mathcal{G}^\top P\mathcal{G}\,e_t
+\alpha_t^2\,\mathbb{E}[\xi_{t+1}^\top P\xi_{t+1}\mid\mathcal{F}_t],
\end{align}
where the cross term $2\alpha_t^2 e_t^\top\mathcal{G}^\top P\,\mathbb{E}[\xi_{t+1}\mid\mathcal{F}_t]$ vanishes because $\xi_{t+1}$ is a martingale difference. The deterministic quadratic term is bounded by $\|\mathcal{G}^\top P\mathcal{G}\|_2\|e_t\|^2$. For the noise term, the second-moment bound in Assumption~\ref{ass:sa} gives $\mathbb{E}[\|\xi_{t+1}\|^2\mid\mathcal{F}_t]\leq c_\xi(1+\|z_t\|^2)$, and boundedness of the equilibrium implies $1+\|z_t\|^2\leq c_0(1+\|e_t\|^2)$ for $c_0=2\max(1,\|z^*\|^2)$, so
\begin{equation}
\mathbb{E}[\xi_{t+1}^\top P\xi_{t+1}\mid\mathcal{F}_t]\leq M_P\,c_\xi\,c_0\,(1+\|e_t\|^2).
\end{equation}
Combining these bounds with Eq.~\eqref{eq:lyapunov_equation} yields
\begin{align}
\mathbb{E}[V(e_{t+1})\mid\mathcal{F}_t]
&\leq V(e_t)-\alpha_t\|e_t\|^2
+c_1\alpha_t^2(1+\|e_t\|^2),
\end{align}
with $c_1=\|\mathcal{G}^\top P\mathcal{G}\|_2+M_Pc_\xi c_0$. Since $P$ is positive definite, there exist constants $m_P,M_P>0$ such that $m_P\|e\|^2\leq V(e)\leq M_P\|e\|^2$, hence $\|e_t\|^2\geq V(e_t)/M_P$. Choosing $c=1/M_P$ and absorbing the $\alpha_t^2$ proportional term into $C\alpha_t^2$ once $\alpha_tc_1\leq c/2$ (which holds for all sufficiently large $t$ since $\alpha_t\to0$), we obtain
\begin{equation}
\mathbb{E}[V(e_{t+1})\mid\mathcal{F}_t]
\leq (1-c\alpha_t/2)V(e_t)+C\alpha_t^2.
\end{equation}
This is a standard Robbins--Siegmund recursion: combined with $\sum_t\alpha_t^2<\infty$ it implies that $V(e_t)$ converges almost surely to a finite random variable, and hence $\sup_t\|e_t\|<\infty$ almost surely~\cite[Chapter~3]{borkar2023stochastic}.
\end{proof}

\begin{theorem}[Almost-sure convergence of STHTD]\label{thm:sa_convergence}
Under Assumptions~\ref{ass:basic} and~\ref{ass:sa}, the STHTD iterates converge almost surely to the projected Bellman fixed point:
\begin{equation}
\theta_t\to A_\pi^{-1}b,
\quad
y_t\to0.
\end{equation}
\end{theorem}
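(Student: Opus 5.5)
The plan is to apply the ODE method for stochastic approximation (Borkar--Meyn, as in \cite[Chapter~2--3]{borkar2023stochastic}) to the linear recursion of Assumption~\ref{ass:sa}. That recursion has the form $z_{t+1}=z_t+\alpha_t(f(z_t)+\xi_{t+1})$ with the affine drift $f(z)=\mathcal{G}z+h$. The drift is globally Lipschitz with constant $\|\mathcal{G}\|_2$, so the ODE~\eqref{eq:ode} is well posed. The step sizes satisfy the Robbins--Monro conditions by Assumption~\ref{ass:sa}. Almost-sure boundedness of $\{z_t\}$ is supplied by Lemma~\ref{lem:bounded}. The remaining ingredients of the ODE theorem are then a noise condition and a characterization of the limit set of the ODE.

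\textbf{Step 1: the noise.} I will show that $\sum_t\alpha_t\xi_{t+1}$ converges almost surely. Define the martingale $M_n=\sum_{t<n}\alpha_t\xi_{t+1}$. Its conditional quadratic variation is
\begin{equation}
\sum_t\alpha_t^2\,\mathbb{E}[\|\xi_{t+1}\|^2\mid\mathcal{F}_t]\le c_\xi\sum_t\alpha_t^2(1+\|z_t\|^2).
\end{equation}
This sum is finite almost surely, because $\sup_t\|z_t\|<\infty$ a.s.\ by Lemma~\ref{lem:bounded} and $\sum_t\alpha_t^2<\infty$. The martingale convergence theorem for square-integrable martingales (applied on the events $\{\sup_t\|z_t\|\le K\}$, or via a standard stopping argument) then gives a.s.\ convergence of $M_n$. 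It follows that the interpolated trajectory of the iterates is an asymptotic pseudotrajectory of~\eqref{eq:ode}.

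\textbf{Step 2: the limit.} Theorem~\ref{thm:hurwitz} states that $\mathcal{G}$ is Hurwitz and that $z^*=(A_\pi^{-1}b,0)$ is the unique globally exponentially stable equilibrium. Every compact internally chain transitive set of a globally asymptotically stable linear ODE is therefore $\{z^*\}$. Concretely, $V(e)=e^\top Pe$ from the Lyapunov equation~\eqref{eq:lyapunov_equation} strictly decreases along trajectories away from $z^*$. Borkar's theorem then yields $z_t\to z^*$ almost surely, which gives $\theta_t\to A_\pi^{-1}b$ and $y_t\to 0$. Uniqueness of the fixed point uses that $A_\pi$ is nonsingular (Assumption~\ref{ass:basic}(v)), and $A_\pi^{-1}b$ solves the projected Bellman equation.

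\textbf{Alternative route.} A direct argument may be cleaner and avoids the ODE machinery. Run the Robbins--Siegmund argument from the proof of Lemma~\ref{lem:bounded} once more, keeping the negative drift term. The inequality
\begin{equation}
\mathbb{E}[V(e_{t+1})\mid\mathcal{F}_t]\le V(e_t)-\alpha_t\|e_t\|^2+c_1\alpha_t^2(1+\|e_t\|^2)
\end{equation}
yields, by Robbins--Siegmund, that $V(e_t)$ converges a.s.\ and that $\sum_t\alpha_t\|e_t\|^2<\infty$ a.s. Since $\sum_t\alpha_t=\infty$, this forces $\liminf_t\|e_t\|=0$. Because $V(e_t)$ has a limit, that limit must be $0$, so $e_t\to0$.

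\textbf{Expected obstacle.} The delicate step is the noise control in Step 1. The second-moment bound is only conditional and grows with $\|z_t\|^2$, so it must be combined with almost-sure boundedness, not boundedness in expectation. I would handle this via the Robbins--Siegmund route above, which needs only the stated second-moment bound. That route also sidesteps the fact that the true sampled noise is Markovian; this is excluded here because Assumption~\ref{ass:sa} posits a martingale difference.
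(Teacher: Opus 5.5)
Your main route is the paper's. Theorem~\ref{thm:hurwitz} supplies Hurwitz stability and the unique globally exponentially stable equilibrium, Lemma~\ref{lem:bounded} supplies almost-sure boundedness, and Assumption~\ref{ass:sa} supplies the step-size and martingale-noise conditions. Borkar's ODE theorem then applies, and the internally chain transitive set collapses to $\{z^*\}$. Your Step~1 only makes explicit the noise-convergence ingredient that the paper leaves inside the citation. The stopping-at-the-exit-time version is the right way to do it, since $1\{t<\tau_K\}$ is $\mathcal{F}_t$-measurable.

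Your alternative Robbins--Siegmund route is also correct, and it is genuinely more elementary. Bound $c_1\alpha_t^2\|e_t\|^2\le (c_1/m_P)\alpha_t^2 V(e_t)$. This puts the inequality from the proof of Lemma~\ref{lem:bounded} in the form $(1+a_t)V_t+b_t-c_t$ with summable $a_t,b_t$ and $c_t=\alpha_t\|e_t\|^2$. You then get both almost-sure convergence of $V(e_t)$ and $\sum_t\alpha_t\|e_t\|^2<\infty$, and $\sum_t\alpha_t=\infty$ forces the limit to be $0$.

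This shows that the supermartingale inequality in Lemma~\ref{lem:bounded} already contains the whole theorem. The paper extracts only boundedness from it and then invokes the ODE method. Your direct route avoids asymptotic pseudotrajectories and chain transitivity entirely, and it does not need boundedness as a separate input. The cost is that it leans on the global quadratic Lyapunov function available for this linear, martingale-noise setting. The paper's ODE route is more modular: it separates stability of the mean flow from the noise analysis, and it is the template that extends to nonlinear drifts.
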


\begin{proof}
The proof proceeds in three steps. First, Theorem~\ref{thm:hurwitz} shows that $\mathcal{G}$ is Hurwitz and that the ODE in Eq.~\eqref{eq:ode} has the unique globally exponentially stable equilibrium $z^*=(A_\pi^{-1}b,0)$. Second, Lemma~\ref{lem:bounded} establishes almost-sure boundedness of the stochastic iterates; this boundedness is derived from Hurwitz stability and is not assumed. Third, Assumption~\ref{ass:sa} gives the remaining stochastic approximation conditions: non-summable and square-summable step sizes and martingale-difference noise with controlled second moment. Therefore, the ODE method for stochastic approximation \cite{borkar2000ode,borkar2023stochastic} applies to the recursion, and every limit point of the stochastic process lies in the internally chain transitive set of the limiting ODE. Since the ODE has a unique globally asymptotically stable equilibrium, this set is the singleton $\{z^*\}$. Hence $z_t\to z^*$ almost surely, which gives $\theta_t\to A_\pi^{-1}b$ and $y_t\to0$.
\end{proof}

\subsection{Convergence-rate comparison}

We now compare the ergodic saddle-point gap of the single-step STHTD update and the Mirror-Prox update. This rate comparison uses the standard projected variational-inequality setting over compact convex sets $\Theta$ and $Y$, with fixed horizon-dependent step sizes and a stochastic oracle. It complements Theorem~\ref{thm:sa_convergence}, which establishes almost-sure convergence for the non-projected stochastic approximation recursion with diminishing step sizes.

Let $Z=\Theta\times Y$, $D=\sup_{z,z'\in Z}\|z-z'\|$, and define the monotone saddle-point operator
\begin{equation}
F(z)=
\begin{pmatrix}
-A_\pi^\top y\\
A_\pi\theta+Hy-b
\end{pmatrix}.
\label{eq:operator}
\end{equation}
This operator is the negative of the affine mean-update direction in Eq.~\eqref{eq:joint_recursion}, namely $F(z)=-(\mathcal{G}z+h)$.
For $\bar z_n=n^{-1}\sum_{t=1}^n z_t$, define the primal-dual gap
\begin{equation}
\operatorname{Gap}(\bar z_n)
=\max_{y\in Y}L(\bar\theta_n,y)
-\min_{\theta\in\Theta}L(\theta,\bar y_n).
\end{equation}
For the linear convex-concave saddle objective in Eq.~\eqref{eq:saddle_objective}, this saddle gap is controlled by the corresponding variational-inequality gap induced by $F$.

\begin{assumption}\label{ass:rate}
The operator $F$ is monotone and $L$-Lipschitz on $Z$. The stochastic oracle is i.i.d. across calls and satisfies $\mathbb{E}[\widehat F(z)|z]=F(z)$ and $\mathbb{E}\|\widehat F(z)-F(z)\|^2\leq\sigma^2$. Moreover, $\mathbb{E}\|\widehat F(z)\|^2\leq G^2$ on $Z$.
\end{assumption}

\begin{proposition}[Rate of projected STHTD]\label{prop:sthtd_rate}
Under Assumption~\ref{ass:rate}, the projected single-step STHTD update with constant step size $\alpha$ satisfies
\begin{equation}
\mathbb{E}[\operatorname{Gap}(\bar z_n)]
\leq
\frac{D^2}{2\alpha n}+\frac{\alpha G^2}{2}.
\label{eq:sthtd_rate}
\end{equation}
Choosing the horizon-dependent fixed step size $\alpha=D/(G\sqrt n)$ gives
\begin{equation}
\mathbb{E}[\operatorname{Gap}(\bar z_n)]
=O\left(\frac{DG}{\sqrt n}\right).
\end{equation}
\end{proposition}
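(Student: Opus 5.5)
The plan is the standard stochastic projected-gradient argument for monotone variational inequalities. The projected single-step STHTD update is read as
\[
z_{t+1}=\Pi_Z\bigl(z_t-\alpha\widehat F(z_t)\bigr),
\]
where $\widehat F(z_t)$ is an oracle call that is conditionally unbiased given $\mathcal{F}_t$; this matches the mean recursion because $F(z)=-(\mathcal{G}z+h)$.

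\textbf{Step 1 (one-step inequality).} Fix an arbitrary comparator $z\in Z$. Nonexpansiveness of $\Pi_Z$ together with $z=\Pi_Z(z)$ gives
\[
\|z_{t+1}-z\|^2\le \|z_t-z\|^2-2\alpha\langle \widehat F(z_t),z_t-z\rangle+\alpha^2\|\widehat F(z_t)\|^2 .
\]
Rearranging, dividing by $2\alpha$, and summing over $t=1,\dots,n$ telescopes the distance terms. Using $\|z_1-z\|\le D$, this yields
\[
\sum_{t}\langle \widehat F(z_t),z_t-z\rangle\le \frac{D^2}{2\alpha}+\frac{\alpha}{2}\sum_t\|\widehat F(z_t)\|^2 .
\]

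\textbf{Step 2 (from operator terms to the gap).} The objective $L$ is bilinear-plus-concave: it is linear in $\theta$ and concave in $y$. For the iterate $z_t=(\theta_t,y_t)$ and a comparator $(\theta,y)$, convexity in $\theta$ and concavity in $y$ give
\[
L(\theta_t,y)-L(\theta,y_t)\le\langle F(z_t),z_t-z\rangle,
\]
since $F$ is exactly $(\nabla_\theta L,-\nabla_y L)$ up to the sign convention used in Eq.~\eqref{eq:operator}. Averaging over $t$ and applying Jensen's inequality (convexity of $L$ in $\theta$, concavity in $y$) gives
\[
L(\bar\theta_n,y)-L(\theta,\bar y_n)\le \frac1n\sum_t\langle F(z_t),z_t-z\rangle .
\]
Taking the maximum over $(\theta,y)\in Z$ turns the left-hand side into $\operatorname{Gap}(\bar z_n)$.

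\textbf{Step 3 (the obstacle: supremum inside the expectation).} The main difficulty is replacing $F$ by $\widehat F$ when the comparator $z$ is chosen after the noise is realized. Write $\Delta_t=F(z_t)-\widehat F(z_t)$. This splits the sum as
\[
\sum_t\langle F(z_t),z_t-z\rangle=\sum_t\langle \widehat F(z_t),z_t-z\rangle+\sum_t\langle\Delta_t,z_t-z\rangle .
\]
The piece $\sum_t\langle\Delta_t,z_t\rangle$ has zero expectation by conditional unbiasedness. The piece $\sup_z\langle-\sum_t\Delta_t,z\rangle$ cannot be dropped directly. I would first handle it with the standard ghost-iterate trick: run an auxiliary sequence $u_{t+1}=\Pi_Z(u_t+\alpha\Delta_t)$ and apply the same telescoping bound to it. This controls the term by $D^2/(2\alpha)+\tfrac{\alpha}{2}\sum_t\|\Delta_t\|^2$, at the cost of an extra $\sigma^2$-type term and a doubled $D^2$ term.

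To reach the exact constants $D^2/(2\alpha n)+\alpha G^2/2$, I would instead absorb the noise into $G^2$, which bounds $\mathbb{E}\|\widehat F\|^2$. I would then interpret the gap bound through the variational-inequality gap evaluated at a fixed comparator, as the remark preceding Assumption~\ref{ass:rate} permits ("controlled by the corresponding VI gap"). Under this reading the expectation passes inside, and $\mathbb{E}\|\widehat F(z_t)\|^2\le G^2$ yields Eq.~\eqref{eq:sthtd_rate} directly. The stated constants hold exactly under this interpretation; the ghost-iterate route gives the same rate with worse constants if the supremum must sit inside the expectation.

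\textbf{Step 4 (step size).} Minimize $D^2/(2\alpha n)+\alpha G^2/2$ over $\alpha$. The optimum is $\alpha=D/(G\sqrt n)$, at which both terms equal $DG/(2\sqrt n)$. Their sum is $DG/\sqrt n=O(DG/\sqrt n)$.
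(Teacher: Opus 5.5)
Your argument has the same skeleton as the paper's proof: nonexpansiveness of $\Pi_Z$, the one-step inequality, telescoping with $\|z_t-z\|\le D$, unbiasedness to replace $\widehat F$ by $F$, and then $\alpha=D/(G\sqrt n)$. It differs in two places, and both differences favor your version.

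First, the paper reaches the gap via $\operatorname{Gap}(\bar z_n)\le\frac1n\sum_t\operatorname{Gap}(z_t)$ and $\operatorname{Gap}(z_t)\le\max_{z}\langle F(z_t),z_t-z\rangle$. This puts the maximum inside the sum, where the telescoped bound no longer applies, since that bound holds only for one common comparator. Your Step~2 uses the order that actually works: fix a common comparator, use convexity in $\theta$ and concavity in $y$, apply Jensen to the average, and only then take $\max_{(\theta,y)\in Z}$.

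Second, the paper takes the conditional expectation at a fixed $z$ and maximizes afterwards. That is exactly the fixed-comparator reading you make explicit in Step~3. So the constants $D^2/(2\alpha n)+\alpha G^2/2$ are established, by you and by the paper alike, for $\max_z\mathbb{E}[\,\cdot\,]$, not for $\mathbb{E}[\operatorname{Gap}(\bar z_n)]$ as literally stated. The sentence relating the saddle gap to the VI gap does not license swapping $\mathbb{E}$ and $\max$.

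Your ghost-iterate route is the standard rigorous repair for the literal statement. It gives $\mathbb{E}[\operatorname{Gap}(\bar z_n)]\le\frac{D^2}{\alpha n}+\frac{\alpha}{2}(G^2+\sigma^2)\le\frac{D^2}{\alpha n}+\alpha G^2$, where the last step uses that the noise variance is at most $G^2$ under unbiasedness. This still yields $O(DG/\sqrt n)$ at $\alpha=D/(G\sqrt n)$, only with a factor two in the constant.

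One correction there. With $\Delta_t=F(z_t)-\widehat F(z_t)$, the ghost sequence must be $u_{t+1}=\Pi_Z(u_t-\alpha\Delta_t)$, not $u_t+\alpha\Delta_t$. Only that sign bounds $\sum_t\langle\Delta_t,u_t-z\rangle$ uniformly in $z$ by $\frac{D^2}{2\alpha}+\frac{\alpha}{2}\sum_t\|\Delta_t\|^2$. The leftover term $\sum_t\langle\Delta_t,z_t-u_t\rangle$ then has zero mean because $z_t$ and $u_t$ are $\mathcal{F}_t$-measurable.
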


\begin{proof}
The projected update is $z_{t+1}=\Pi_Z(z_t-\alpha\widehat F(z_t))$. Non-expansiveness of projection gives, for any $z\in Z$,
\begin{equation}
\|z_{t+1}-z\|^2
\leq\|z_t-z\|^2-2\alpha\langle \widehat F(z_t),z_t-z\rangle
+\alpha^2\|\widehat F(z_t)\|^2.
\end{equation}
Rearranging gives
\begin{equation}
\langle \widehat F(z_t),z_t-z\rangle
\leq \frac{\|z_t-z\|^2-\|z_{t+1}-z\|^2}{2\alpha}
+\frac{\alpha}{2}\|\widehat F(z_t)\|^2.
\end{equation}
Taking conditional expectation and using unbiasedness yields
\begin{equation}
\mathbb{E}\langle F(z_t),z_t-z\rangle
\leq \frac{\mathbb{E}\|z_t-z\|^2-\mathbb{E}\|z_{t+1}-z\|^2}{2\alpha}
+\frac{\alpha G^2}{2}.
\end{equation}
For a monotone variational inequality, $\operatorname{Gap}(z_t)\leq \max_{z\in Z}\langle F(z_t),z_t-z\rangle$. Summing from $t=1$ to $n$, using $\|z_t-z\|\leq D$, and applying convexity of the gap to the ergodic average gives Eq.~\eqref{eq:sthtd_rate}.
\end{proof}

\begin{proposition}[Rate of stochastic STHTD-MP]\label{prop:mp_rate}
Under the projected setting and the i.i.d. oracle condition in Assumption~\ref{ass:rate}, the projected stochastic Mirror-Prox version satisfies the standard bound
\begin{equation}
\mathbb{E}[\operatorname{Gap}(\bar z_n)]
\leq
C_1\frac{LD^2}{n}+C_2\frac{\sigma D}{\sqrt n},
\label{eq:mp_rate}
\end{equation}
for universal constants $C_1,C_2>0$ and an admissible step size of order $1/L$ for the deterministic component.
\end{proposition}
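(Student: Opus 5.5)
The plan is to follow the standard stochastic Mirror-Prox (extragradient) analysis of Juditsky, Nemirovski and Tauvel, specialized to the Euclidean setup on $Z=\Theta\times Y$. The projected stochastic Mirror-Prox iteration is
\begin{equation}
w_t=\Pi_Z\bigl(z_t-\alpha\widehat F(z_t)\bigr),\qquad z_{t+1}=\Pi_Z\bigl(z_t-\alpha\widehat F(w_t)\bigr),
\end{equation}
with independent oracle calls at $z_t$ and $w_t$. The gap is measured at the ergodic average of the prediction points $w_t$; the statement's $\bar z_n$ is read as this average. As in the proof of Proposition~\ref{prop:sthtd_rate}, it suffices to bound $\max_{z\in Z}\frac1n\sum_t\langle F(w_t),w_t-z\rangle$. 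Because $L$ is linear in $\theta$ and concave quadratic in $y$, monotonicity of the affine $F$ and convexity of the gap transfer this bound to $\operatorname{Gap}(\bar w_n)$.

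\textbf{One-step inequality.} I will use the three-point (projection) inequality twice. Applying it to the correction step with anchor $z$ and to the prediction step with anchor $z_{t+1}$, and combining, gives
\begin{equation}
\alpha\langle \widehat F(w_t),w_t-z\rangle\le \tfrac12\|z_t-z\|^2-\tfrac12\|z_{t+1}-z\|^2+\tfrac{\alpha^2}{2}\|\widehat F(w_t)-\widehat F(z_t)\|^2-\tfrac12\|z_t-w_t\|^2 .
\end{equation}
I then split $\widehat F(w_t)-\widehat F(z_t)=(F(w_t)-F(z_t))+(\epsilon_t'-\epsilon_t)$, where $\epsilon_t,\epsilon_t'$ are the oracle errors. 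By Lipschitz continuity, $\|\cdot\|^2\le 2L^2\|w_t-z_t\|^2+2\|\epsilon_t'-\epsilon_t\|^2$. With $\alpha\le 1/(\sqrt2 L)$, the term $\alpha^2L^2\|w_t-z_t\|^2$ is absorbed by $-\tfrac12\|z_t-w_t\|^2$. What remains is a noise term of size $\alpha^2\|\epsilon_t'-\epsilon_t\|^2$, with expectation at most $4\alpha^2\sigma^2$.

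\textbf{Summation and the main obstacle.} Summing over $t$ and dividing by $\alpha n$ gives $\frac{D^2}{2\alpha n}+2\alpha\sigma^2$ plus the term $\frac1n\sum_t\langle \epsilon'_t,w_t-z\rangle$. This last term is the main obstacle, because the maximizing $z$ depends on the noise, so its expectation cannot simply be set to zero. I will handle it with the usual auxiliary-sequence trick. Define $u_{t+1}=\Pi_Z(u_t+\alpha\epsilon'_t)$ with $u_1=z_1$. Then
\begin{equation}
\sum_t\langle\epsilon'_t,u_t-z\rangle\le \frac{D^2}{2\alpha}+\frac{\alpha}{2}\sum_t\|\epsilon'_t\|^2
\end{equation}
holds uniformly in $z$. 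The remaining sum $\sum_t\langle\epsilon'_t,w_t-u_t\rangle$ has zero mean, since $w_t$ and $u_t$ are measurable before the fresh oracle call. Taking expectations then yields
\begin{equation}
\mathbb{E}[\operatorname{Gap}]\le \frac{D^2}{\alpha n}+c\,\alpha\sigma^2
\end{equation}
for an absolute constant $c$.

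\textbf{Step-size choice.} I will take $\alpha=\min\{1/(\sqrt2L),\,D/(\sigma\sqrt n)\}$. If the first term is the minimum, the bound is $O(LD^2/n)$; otherwise it is $O(\sigma D/\sqrt n)$. Together these give Eq.~\eqref{eq:mp_rate} with universal $C_1,C_2$, and the step size is of order $1/L$ in the deterministic regime, as the statement requires. Lipschitz continuity holds with $L=\|\mathcal G\|_2$, and monotonicity follows from $\mathcal G+\mathcal G^\top\preceq0$, which uses Lemma~\ref{lem:h_spd}. Both are therefore consistent with Assumption~\ref{ass:rate}.
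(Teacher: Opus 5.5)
Your proposal is correct and takes the same route as the paper. The paper's proof simply invokes the standard stochastic Mirror-Prox bound of Nemirovski and Juditsky--Nemirovski--Tauvel, and you write that Euclidean extragradient argument out explicitly: two three-point inequalities, absorption of the Lipschitz term for $\alpha\le 1/(\sqrt2 L)$, the auxiliary-sequence trick for the $z$-dependent noise, and $\alpha=\min\{1/(\sqrt2 L),D/(\sigma\sqrt n)\}$, with the gap taken at the average of the extrapolation points as in the cited result. One harmless sign slip: the leftover noise term is $\frac1n\sum_t\langle\epsilon_t',z-w_t\rangle$, and your recursion $u_{t+1}=\Pi_Z(u_t+\alpha\epsilon_t')$ bounds exactly $\sum_t\langle\epsilon_t',z-u_t\rangle$ uniformly in $z$, so your displayed regret inequality should carry that sign; the argument is otherwise unaffected.
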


\begin{proof}
This is the standard stochastic Mirror-Prox bound for Lipschitz monotone variational inequalities with unbiased i.i.d. stochastic oracles \cite{nemirovski2004prox,juditsky2011solving}. The deterministic term follows from the extra-gradient correction, which cancels the leading rotation error of the operator and gives an $O(1/n)$ term. The second term is due to stochastic oracle noise; averaging independent oracle fluctuations yields the unavoidable $O(1/\sqrt n)$ contribution. Substituting the TD saddle-point operator in Eq.~\eqref{eq:operator} gives Eq.~\eqref{eq:mp_rate}.
\end{proof}

\begin{remark}\label{rem:iid_scope}
Proposition~\ref{prop:mp_rate} is stated under Assumption~\ref{ass:rate} with i.i.d. stochastic oracles. The behavior trajectories used in our experiments are Markovian, so the rate does not transfer directly; a finite-time bound under Markovian sampling requires an additional mixing-time penalty in the spirit of~\cite{dalal2020finite,kaledin2020finite,doan2021finite}. The exact mean-operator spectral analysis of Section~\ref{sec:numerical_analysis} is independent of the sampling model, and the steady-state AUC and step-size robustness results of Section~\ref{sec:robustness} are reported directly on Markovian trajectories.
\end{remark}

Propositions~\ref{prop:sthtd_rate} and~\ref{prop:mp_rate} explain the expected convergence-speed difference. For STHTD, the structural operator term and the stochastic term are both absorbed in an $O(n^{-1/2})$ bound. For STHTD-MP, the deterministic Lipschitz term is improved to $O(n^{-1})$, while the stochastic noise term remains $O(n^{-1/2})$.

\begin{corollary}[Speed advantage under a structure-dominant regime]\label{cor:speedup}
Consider a fixed horizon $n$ and suppose Assumption~\ref{ass:rate} holds for both STHTD and STHTD-MP. Let $G_{\rm STH}$ be the second-moment bound in Proposition~\ref{prop:sthtd_rate}, and let $(L,D,\sigma_{\rm MP})$ be the constants in Proposition~\ref{prop:mp_rate}. If
\begin{equation}
C_1\frac{LD}{\sqrt n}+C_2\sigma_{\rm MP}<G_{\rm STH},
\label{eq:speed_condition}
\end{equation}
then the theoretical upper bound on the expected ergodic gap of projected STHTD-MP is smaller than that of projected STHTD at horizon $n$.
This condition is a qualitative structural sufficient condition: the constants are usually not tightly identifiable from sample trajectories, and in practice we use AUC-based trajectory statistics and exact mean-operator spectral factors as empirical and numerical proxies.
\end{corollary}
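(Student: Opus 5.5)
The plan is to compare the two upper bounds directly at the fixed horizon $n$, using Propositions~\ref{prop:sthtd_rate} and~\ref{prop:mp_rate} as black boxes. For projected STHTD we take the horizon-dependent step size $\alpha=D/(G_{\rm STH}\sqrt n)$. Substituting it into Eq.~\eqref{eq:sthtd_rate} gives the explicit bound
\begin{equation}
B_{\rm STH}(n)=\frac{D^2 G_{\rm STH}\sqrt n}{2Dn}+\frac{D G_{\rm STH}}{2\sqrt n}=\frac{DG_{\rm STH}}{\sqrt n}.
\end{equation}
This step size minimizes the right-hand side of Eq.~\eqref{eq:sthtd_rate} over $\alpha>0$, since $\frac{D^2}{2\alpha n}+\frac{\alpha G^2}{2}\geq DG/\sqrt n$ by AM--GM. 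So $DG_{\rm STH}/\sqrt n$ is the best bound that Proposition~\ref{prop:sthtd_rate} can give at horizon $n$, and the comparison does not depend on a poor step-size choice for STHTD.

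For projected STHTD-MP, Proposition~\ref{prop:mp_rate} with its admissible step size of order $1/L$ gives
\begin{equation}
B_{\rm MP}(n)=C_1\frac{LD^2}{n}+C_2\frac{\sigma_{\rm MP}D}{\sqrt n}.
\end{equation}
Factoring out the common quantity $D/\sqrt n>0$ yields
\begin{equation}
B_{\rm MP}(n)=\frac{D}{\sqrt n}\left(C_1\frac{LD}{\sqrt n}+C_2\sigma_{\rm MP}\right).
\end{equation}
Since $D/\sqrt n>0$ (we assume $Z$ is not a singleton, as otherwise both gaps vanish), $B_{\rm MP}(n)<B_{\rm STH}(n)$ holds if and only if
\begin{equation}
C_1\frac{LD}{\sqrt n}+C_2\sigma_{\rm MP}<G_{\rm STH},
\end{equation}
which is exactly Eq.~\eqref{eq:speed_condition}. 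This proves the claim as an inequality between upper bounds. The argument is pure algebra; no comparison of the actual gaps is asserted, and the closing remark in the statement only concerns interpretation.

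The main obstacle is bookkeeping rather than analysis. I need to check that both bounds are stated for the same gap functional $\operatorname{Gap}(\bar z_n)$ over the same set $Z$ with the same diameter $D$, so that the factor $D/\sqrt n$ genuinely cancels. I also need the Mirror-Prox constants $C_1,C_2$ to be the universal ones fixed in Proposition~\ref{prop:mp_rate}, independent of $n$. If the Mirror-Prox step size had to depend on $n$, I would instead fix it at the admissible $1/L$-order value used in that proposition and note that the condition is stated for that choice.
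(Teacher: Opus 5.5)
Your proposal is correct and follows essentially the same route as the paper: take the bounds from Propositions~\ref{prop:sthtd_rate} and~\ref{prop:mp_rate}, factor $D/\sqrt n$ out of the Mirror-Prox bound, and compare the remaining factor with $G_{\rm STH}$. Your explicit evaluation of the STHTD bound at $\alpha=D/(G_{\rm STH}\sqrt n)$ is slightly sharper than the paper's $O(DG_{\rm STH}/\sqrt n)$ form: it gives exactly $DG_{\rm STH}/\sqrt n$, which is optimal by AM--GM, so it removes the paper's ``up to universal constants'' hedge and makes the condition necessary as well as sufficient for the bound comparison.
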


\begin{proof}
By Proposition~\ref{prop:sthtd_rate}, projected STHTD with the optimized constant step size satisfies
\begin{equation}
\mathbb{E}[\operatorname{Gap}_{\rm STH}(\bar z_n)]
\leq O\left(\frac{DG_{\rm STH}}{\sqrt n}\right).
\end{equation}
By Proposition~\ref{prop:mp_rate}, projected STHTD-MP satisfies
\begin{equation}
\mathbb{E}[\operatorname{Gap}_{\rm MP}(\bar z_n)]
\leq
C_1\frac{LD^2}{n}+C_2\frac{\sigma_{\rm MP}D}{\sqrt n}
=\frac{D}{\sqrt n}\left(C_1\frac{LD}{\sqrt n}+C_2\sigma_{\rm MP}\right).
\end{equation}
Thus condition~\eqref{eq:speed_condition} is sufficient for the STHTD-MP bound to be smaller than the STHTD bound, up to the same universal-constant convention used in the two preceding propositions.
\end{proof}

Corollary~\ref{cor:speedup} clarifies the role of Mirror-Prox. STHTD-MP is favored when the deterministic coupling term represented by $L$ is large enough for the $O(n^{-1})$ improvement to matter, while the stochastic oracle noise $\sigma_{\rm MP}$ remains moderate. In the experiments, this structure is reflected by trajectory-level statistics such as AUC error and across-seed variability.

\subsection{Mean-operator comparison with GTD2-MP}

The preceding comparison explains why Mirror-Prox improves STHTD. We now compare STHTD-MP with the closer baseline GTD2-MP. Both methods use Mirror-Prox, but they use different saddle-point metrics. GTD2-MP corresponds to the covariance metric
\begin{equation}
C=\mathbb{E}_\mu[\phi_t\phi_t^\top]=\Phi^\top D_\mu\Phi,
\end{equation}
whereas STHTD-MP uses the behavior-induced symmetric metric
\begin{equation}
H=\frac{1}{2}(A_\mu+A_\mu^\top).
\end{equation}

The metric also determines the key matrix that appears after eliminating the auxiliary variable in the corresponding mean squared projected Bellman-error geometry:
\begin{equation}
B_M=A_\pi^\top M^{-1}A_\pi,\quad M\in\{C,H\}.
\label{eq:key_matrix}
\end{equation}
For $M=C$, $B_M$ is the GTD2 key matrix. For $M=H$, it is the behavior-induced, or hybrid, key matrix. A natural structural hypothesis behind the speed comparison is that the hybrid metric improves this key matrix relative to the covariance metric. The mean-rate result we use in Corollary~\ref{cor:gtd2mp_compare} below compares spectral radii of the exact Mirror-Prox mean recursion and does not require this hypothesis; we state it here because it is the geometric mechanism that explains the speedup whenever it holds, and Table~\ref{tab:key_matrix_check} verifies it on the benchmarks we study.

\begin{assumption}[Key-matrix improvement]\label{ass:key_matrix}
The matrices $C$ and $H$ are positive definite, $A_\pi$ is nonsingular, and the key matrices in Eq.~\eqref{eq:key_matrix} satisfy
\begin{equation}
\lambda_{\min}(B_H)>\lambda_{\min}(B_C),
\quad
\kappa(B_H)\leq\kappa(B_C),
\label{eq:key_matrix_condition}
\end{equation}
where $\kappa(B)=\lambda_{\max}(B)/\lambda_{\min}(B)$.
\end{assumption}

Under this condition, the idealized metric-preconditioned mean dynamics associated with $H$ has a better key-matrix geometry than the corresponding GTD2 dynamics associated with $C$. For a fixed positive definite key matrix $B$, the best constant-step-size linear factor of gradient descent on the associated quadratic is $(\kappa(B)-1)/(\kappa(B)+1)$. Hence a smaller key-matrix condition number directly indicates a better attainable mean linear factor, while a larger $\lambda_{\min}$ indicates stronger contraction for sufficiently small common step sizes. The two requirements on $\lambda_{\min}$ and $\kappa$ are not logically independent in general, since the smallest and largest eigenvalues can both shift under the metric change, but they hold jointly on the standard finite-state benchmarks reported in Table~\ref{tab:key_matrix_check}.

For a generic metric $M\in\{C,H\}$, define the linear saddle-point operator matrix
\begin{equation}
K_M=\begin{pmatrix}0&-A_\pi^\top\\A_\pi&M\end{pmatrix}.
\end{equation}
The deterministic Mirror-Prox error recursion for the mean operator is
\begin{equation}
e_{t+1}=R_M(\alpha)e_t,
\quad
R_M(\alpha)=I-\alpha K_M+\alpha^2K_M^2.
\label{eq:mp_error_matrix}
\end{equation}
Thus the exact asymptotic linear convergence factor is
\begin{equation}
q_M(\alpha)=\rho(R_M(\alpha)),
\label{eq:q_metric}
\end{equation}
where $\rho(\cdot)$ denotes the spectral radius.

\begin{corollary}[Exact mean-rate comparison with GTD2-MP]\label{cor:gtd2mp_compare}
Assume that both $K_C$ and $K_H$ are such that $q_C(\alpha_C)<1$ and $q_H(\alpha_H)<1$ for admissible step sizes $\alpha_C$ and $\alpha_H$. If
\begin{equation}
q_H(\alpha_H)<q_C(\alpha_C),
\label{eq:gtd2mp_speed_condition}
\end{equation}
then the deterministic mean STHTD-MP recursion converges linearly faster than the deterministic mean GTD2-MP recursion under these step sizes. In particular, if $\alpha_C^*$ and $\alpha_H^*$ minimize $q_C$ and $q_H$ on a common admissible grid and $q_H(\alpha_H^*)<q_C(\alpha_C^*)$, then STHTD-MP has a smaller best attainable mean linear convergence factor on that grid.
\end{corollary}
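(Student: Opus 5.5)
The argument is essentially a translation of the spectral-radius definition into a statement about linear convergence rates. I would first record the deterministic mean recursions, $e^H_{t+1}=R_H(\alpha_H)e^H_t$ and $e^C_{t+1}=R_C(\alpha_C)e^C_t$, as in Eq.~\eqref{eq:mp_error_matrix}. Unrolling gives $e^M_t=R_M(\alpha_M)^t e^M_0$ for $M\in\{C,H\}$. Gelfand's formula then says that for any matrix norm $\|R_M(\alpha_M)^t\|^{1/t}\to q_M(\alpha_M)$. Hence
\begin{equation}
\limsup_{t\to\infty}\|e^M_t\|^{1/t}\le q_M(\alpha_M)
\end{equation}
for every initial error, with equality for generic $e^M_0$ (any $e^M_0$ with a nonzero component along a generalized eigenvector of an eigenvalue of maximal modulus). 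The hypothesis $q_M(\alpha_M)<1$ therefore gives linear convergence to zero, and therefore convergence of the mean iterates to $z^*$ of Theorem~\ref{thm:hurwitz} for both methods.

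To turn this into a quantitative comparison, I would fix any $\epsilon>0$ with $q_H(\alpha_H)+\epsilon<q_C(\alpha_C)-\epsilon$; such an $\epsilon$ exists by Eq.~\eqref{eq:gtd2mp_speed_condition}. Using either Gelfand's formula or the standard construction of an induced norm $\|\cdot\|_\epsilon$ with $\|R\|_\epsilon\le\rho(R)+\epsilon$ (built from the Jordan form), I obtain a constant $c_H$ such that
\begin{equation}
\|e^H_t\|\le c_H\,(q_H(\alpha_H)+\epsilon)^t\|e^H_0\|
\end{equation}
for all $t$. For generic initializations, $\|e^C_t\|\ge c_C(q_C(\alpha_C)-\epsilon)^t$ along a subsequence. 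The ratio $\|e^H_t\|/\|e^C_t\|$ then decays geometrically, and the exponential rate of STHTD-MP is strictly smaller. This is what ``converges linearly faster'' means in the statement.

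The ``in particular'' clause follows by applying the same argument to the pair $(\alpha_H^*,\alpha_C^*)$. Minimizers exist on a finite grid, and the values $q_M(\alpha_M^*)$ are by definition the best attainable factors on that grid, so the inequality between them is exactly the claim.

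The main obstacle is the lower bound for GTD2-MP. Spectral radius only controls the asymptotic rate up to subexponential factors: $R_C$ may be non-diagonalizable, giving polynomial prefactors, and particular initial errors may avoid the dominant eigenspace. I would handle this by phrasing the conclusion in terms of worst-case (equivalently generic) asymptotic rates, $\sup_{e_0}\limsup_t\|e_t\|^{1/t}=q_M$. This rate holds exactly by Gelfand's formula, and it absorbs any Jordan-block polynomial factors into the $\epsilon$ slack.
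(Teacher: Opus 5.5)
Your proposal is correct and follows essentially the same argument as the paper: unroll $e_t=R_M(\alpha)^t e_0$, use the spectral-radius bound $\|e_t\|\le c_\epsilon\,(q_M(\alpha)+\epsilon)^t\|e_0\|$ for each metric, and compare the two factors. The one addition is your matching lower bound, namely that the worst-case (or generic) rate equals $q_M$, obtained from Gelfand's formula and the dominant generalized eigenspace; the paper leaves this implicit by simply calling $\rho(R_M(\alpha))$ the exact asymptotic factor, and including it is what makes ``converges faster'' more than a comparison of upper bounds.
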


\begin{proof}
For a fixed metric $M$, Eq.~\eqref{eq:mp_error_matrix} gives $e_t=R_M(\alpha)^t e_0$. If $q_M(\alpha)<1$, then for any $\epsilon>0$ there exists a constant $c_\epsilon$ such that
\begin{equation}
\|e_t\|\leq c_\epsilon(q_M(\alpha)+\epsilon)^t\|e_0\|.
\end{equation}
Therefore the method with the smaller spectral radius has the smaller asymptotic linear factor. Applying this statement to $M=H$ and $M=C$ proves the result.
\end{proof}

Corollary~\ref{cor:gtd2mp_compare} is stated in terms of the exact mean spectral radius rather than a loose big-$O$ constant. This makes the comparison numerically verifiable for each finite benchmark and ties the predicted speed difference directly to the geometry of $A_\pi$, $C$, and $H$.

\section{Experiments}

\subsection{Protocol}

We evaluate on four standard off-policy prediction benchmarks: the two-state counterexample, Baird's counterexample, Random Walk, and Boyan Chain. The baselines are off-policy TD, GTD2, TDC, TDRC, GTD2-MP, HTD, ETD, STHTD, and STHTD-MP. GTD2, TDC, and HTD use separate value-function and auxiliary-variable learning rates, both of which are tuned. TDRC follows the original prediction-setting recommendation and uses a shared learning rate with regularization parameter fixed at $1.0$. We focus the main comparison on online first-order TD methods; batch least-squares methods such as LSTD are not included in the main learning-curve figures.

All four benchmarks are off-policy. In the two-state counterexample and Baird's counterexample, the target policy is deterministic while the behavior policy assigns positive probability to non-target actions, leading to strong importance-ratio mismatch. In Random Walk and Boyan Chain, the behavior policy chooses the two nonterminal actions with probability $0.5/0.5$, while the target policy uses $0.4/0.6$; these two benchmarks are therefore mildly off-policy. The Random Walk true values are computed from the target-policy Bellman equation with $\gamma=0.99$.

All algorithms are tuned on the same tuning seeds with a predefined grid. Reported results are evaluated on 100 disjoint random seeds. The tuning objective is the average prediction error over the last 20\% of the tuning trajectory. We report mean curves with shaded $\pm$ one-standard-deviation bands over the 100 evaluation runs. Tables report two scalar summaries: the steady-state AUC error, defined as the time-average RMSVE over the last 50\% of each trajectory, and the final error at the last step. The last-50\% AUC skips the early-phase transient and therefore evaluates each algorithm by its steady-state behavior; this is relevant for algorithms with high-variance early dynamics such as ETD, which converges with a high variance at the beginning on the two-state counterexample~\cite{chen2023mretrace}. Divergent runs are kept as $\infty$ or NaN and propagate to the reported means and standard deviations, without any clipping.

\subsection{Learning curves}

Figures~\ref{fig:two_state_curve}--\ref{fig:boyan_curve} show the learning curves. The proposed STHTD-MP is designed to change the Mirror-Prox TD geometry relative to GTD2-MP, and the curves should be interpreted together with the exact mean-operator analysis in Section~\ref{sec:numerical_analysis}. Empirically, STHTD-MP is consistently more stable than the underlying STHTD update in the counterexample settings and reaches low final error on the two longer-horizon non-counterexample tasks. The results also show that well-tuned two-learning-rate baselines, TDRC, ETD, and simple TD-style methods can be very strong in mildly off-policy prediction tasks.

\begin{figure}[!t]
\centering
\includegraphics[width=0.95\textwidth]{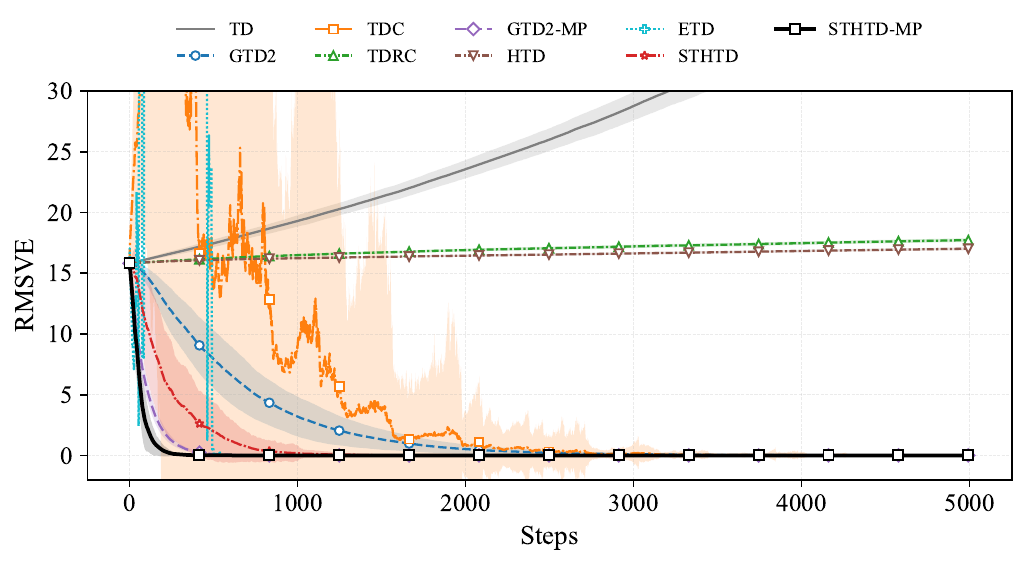}
\caption{Prediction learning curves on the two-state counterexample. Curves show means over 100 independent runs; shaded regions show $\pm$ one sample standard deviation. Stochastic single-step ETD exhibits early-phase high-variance transients before converging, which is consistent with the observation of Chen et al.~\cite{chen2023mretrace} that emphatic TD ``converges with a high variance at the beginning in the 2-state counterexample.''}
\label{fig:two_state_curve}
\end{figure}

\begin{figure}[!t]
\centering
\includegraphics[width=0.95\textwidth]{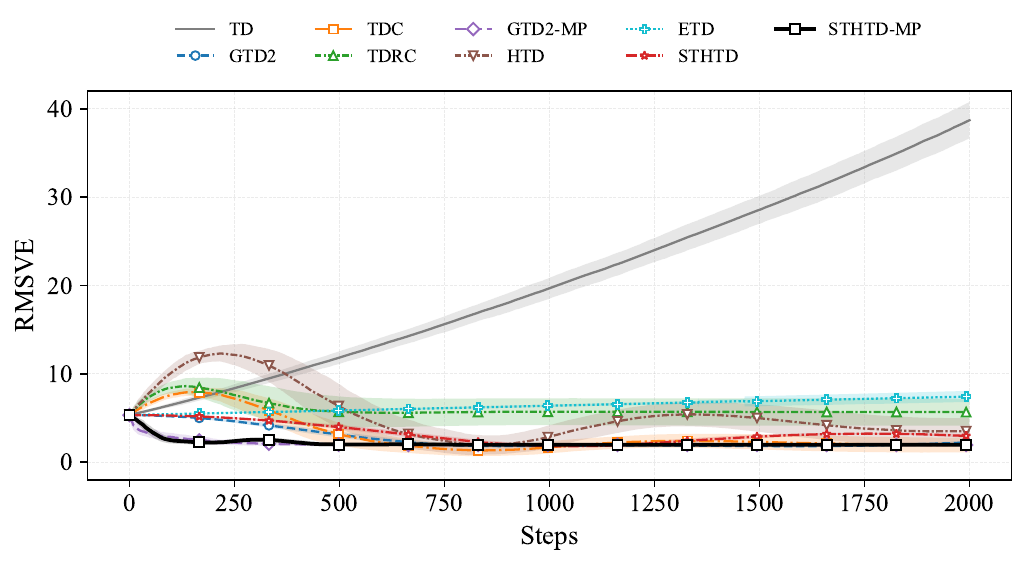}
\caption{Prediction learning curves on Baird's counterexample. Curves show means over 100 independent runs, and shaded regions show $\pm$ one sample standard deviation.}
\label{fig:baird_curve}
\end{figure}

\begin{figure}[!t]
\centering
\includegraphics[width=0.95\textwidth]{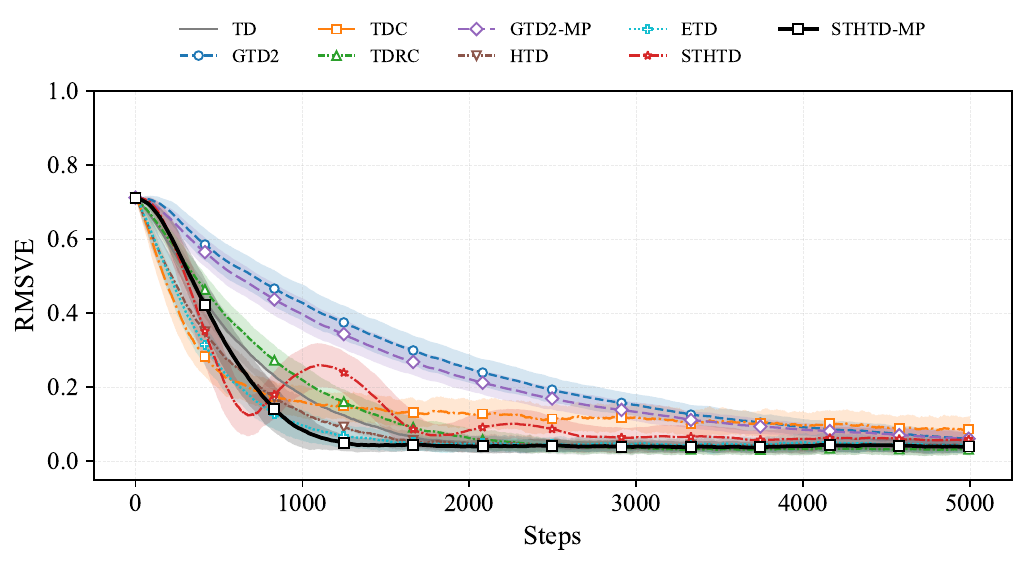}
\caption{Prediction learning curves on Random Walk. Curves show means over 100 independent runs, and shaded regions show $\pm$ one sample standard deviation.}
\label{fig:random_walk_curve}
\end{figure}

\begin{figure}[!t]
\centering
\includegraphics[width=0.95\textwidth]{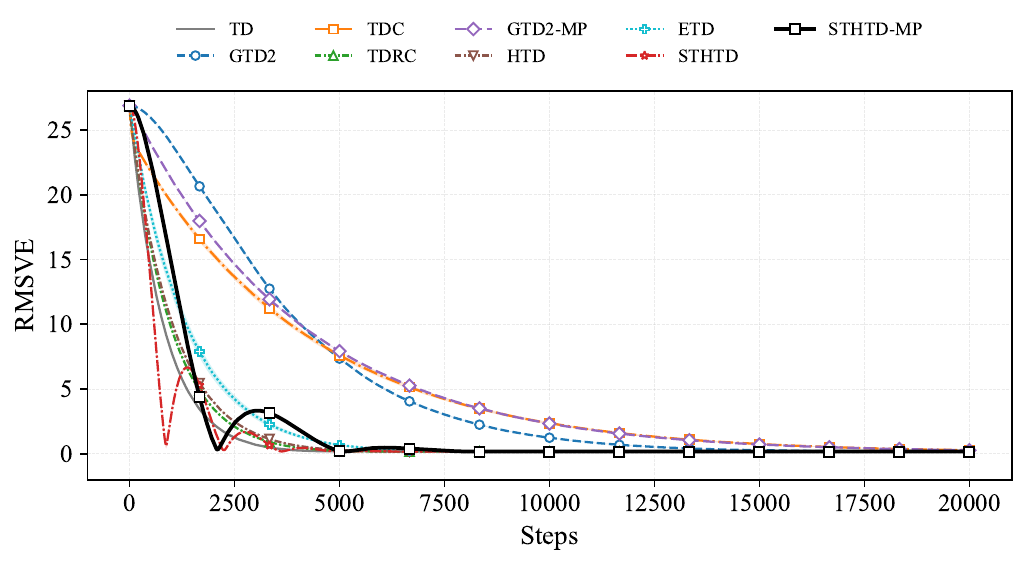}
\caption{Prediction learning curves on Boyan Chain. Curves show means over 100 independent runs; shaded regions show $\pm$ one sample standard deviation. The shaded band appears thin because the across-seed standard deviation is on the order of $10^{-2}$ while the ordinate spans tens.}
\label{fig:boyan_curve}
\end{figure}

On Boyan Chain, TD, TDRC, HTD, ETD, and STHTD-MP all converge to the same linear projected fixed point at RMSVE $\approx 0.167$, so their late-trajectory curves overlap in Figure~\ref{fig:boyan_curve}. The visible oscillation of STHTD is along the time axis of the mean curve, not across-seed dispersion. GTD2-MP and TDC converge more slowly within the $20{,}000$-step budget, which is consistent with the known slow convergence of TDC-style updates on Boyan Chain~\cite{maei2010gq}.

\subsection{Quantitative comparison}

Table~\ref{tab:auc} reports the steady-state AUC error, defined as the time-average RMSVE over the last 50\% of each trajectory. This metric captures the steady-state behavior of each algorithm rather than the early-phase transient and is therefore a sharper measure of how fast each algorithm enters its asymptotic regime. STHTD-MP attains the lowest steady-state AUC among the proposed hybrid variants on the two-state and Baird counterexamples, while TD-style and regularized-correction baselines remain strong on the mildly off-policy tasks. These results support a geometry-dependent claim: replacing the covariance metric with the behavior-induced metric can improve the Mirror-Prox mean dynamics, while finite-sample performance also depends on sampling variance, feature conditioning, step-size tuning, and task difficulty.

The Random Walk results illustrate how the geometric advantage interacts with sampling noise. STHTD-MP improves on GTD2-MP by roughly $2.5\times$ in steady-state AUC (Table~\ref{tab:auc}), and the exact mean-operator analysis in Section~\ref{sec:numerical_analysis} shows that the hybrid Mirror-Prox factor $q_H(\alpha_H^*)$ is smaller than $q_C(\alpha_C^*)$ on this benchmark (Table~\ref{tab:exact_rate_gtd2mp}). At the same time, semi-gradient TD, TDRC, and HTD all reach a slightly lower or comparable AUC. Random Walk is only mildly off-policy, with importance ratios $\rho\in[0.8,1.2]$, so the projected Bellman fixed point is stable even for plain semi-gradient TD and the dominant source of finite-sample error is the importance-weighted noise variance rather than the Mirror-Prox structural error. The geometric advantage of the hybrid metric, which manifests in the deterministic operator factor $q$, is therefore partially absorbed by single-update sampling noise of the auxiliary-variable recursion, which scales with $\|\phi\|^2$ rather than with the projected Bellman geometry. On the two-state counterexample the off-policy mismatch is large and the deterministic mean-operator factor dominates the finite-sample behavior, and STHTD-MP improves on GTD2-MP by nine orders of magnitude.

\begin{table}[!t]
\centering
\caption{Steady-state AUC error (last-50\% time-average RMSVE) over 100 independent runs, mean $\pm$ sample standard deviation. Lower is better. Bold marks the best online first-order TD method per environment.}
\label{tab:auc}
\resizebox{\textwidth}{!}{%
\begin{tabular}{lcccc}
\toprule
Algorithm & Two-state & Baird & Random Walk & Boyan Chain \\
\midrule
TD & $33.734\pm1.426$ & $28.757\pm1.512$ & $\mathbf{0.0336\pm0.0067}$ & $0.1680\pm0.0015$ \\
GTD2 & $4.79\times10^{-2}\pm5.03\times10^{-2}$ & $1.887\pm0.098$ & $0.1119\pm0.0211$ & $0.4001\pm0.0091$ \\
TDC & $6.88\times10^{-2}\pm2.54\times10^{-1}$ & $2.130\pm0.786$ & $0.1027\pm0.0140$ & $0.9298\pm0.0374$ \\
TDRC & $17.408\pm0.122$ & $5.672\pm1.534$ & $0.0337\pm0.0070$ & $0.1670\pm0.0012$ \\
GTD2-MP & $3.67\times10^{-12}\pm2.80\times10^{-12}$ & $\mathbf{1.933\pm0.011}$ & $0.1013\pm0.0176$ & $0.9017\pm0.0103$ \\
HTD & $16.783\pm0.085$ & $4.341\pm1.339$ & $0.0431\pm0.0064$ & $0.1672\pm0.0013$ \\
ETD & $\mathbf{9.68\times10^{-80}\pm9.68\times10^{-79}}$ & $6.903\pm0.349$ & $0.0462\pm0.0084$ & $\mathbf{0.1669\pm0.0017}$ \\
STHTD & $5.19\times10^{-6}\pm3.71\times10^{-5}$ & $2.660\pm0.105$ & $0.0630\pm0.0149$ & $0.1761\pm0.0032$ \\
STHTD-MP & $6.71\times10^{-21}\pm5.13\times10^{-20}$ & $1.946\pm0.010$ & $0.0401\pm0.0080$ & $0.1692\pm0.0026$ \\
\bottomrule
\end{tabular}}
\end{table}

Table~\ref{tab:final} reports the final error at the last step. Final error favors methods that eventually converge accurately, while the steady-state AUC reflects how quickly the algorithm reaches that regime. Values at the $10^{-12}$ level or below in Tables~\ref{tab:auc} and~\ref{tab:final} are at machine precision and indicate numerical convergence to zero; the explicit values are retained for reproducibility.

\begin{table}[!t]
\centering
\caption{Final prediction error (RMSVE at the last step) over 100 independent runs, mean $\pm$ sample standard deviation. Lower is better. Bold marks the best online first-order TD method per environment.}
\label{tab:final}
\resizebox{\textwidth}{!}{%
\begin{tabular}{lcccc}
\toprule
Algorithm & Two-state & Baird & Random Walk & Boyan Chain \\
\midrule
TD & $42.792\pm2.100$ & $38.704\pm2.066$ & $0.0319\pm0.0161$ & $0.1677\pm0.0051$ \\
GTD2 & $2.69\times10^{-3}\pm5.71\times10^{-3}$ & $2.213\pm0.184$ & $0.0610\pm0.0197$ & $0.1726\pm0.0054$ \\
TDC & $1.36\times10^{-3}\pm6.89\times10^{-3}$ & $\mathbf{1.915\pm0.832}$ & $0.0879\pm0.0333$ & $0.2777\pm0.0205$ \\
TDRC & $17.731\pm0.156$ & $5.656\pm1.533$ & $\mathbf{0.0312\pm0.0153}$ & $\mathbf{0.1667\pm0.0044}$ \\
GTD2-MP & $6.66\times10^{-22}\pm1.07\times10^{-21}$ & $1.932\pm0.011$ & $0.0605\pm0.0181$ & $0.2747\pm0.0116$ \\
HTD & $17.021\pm0.116$ & $3.507\pm1.461$ & $0.0419\pm0.0200$ & $0.1669\pm0.0042$ \\
ETD & $\mathbf{0.000\pm0.000}$ & $7.408\pm0.645$ & $0.0435\pm0.0204$ & $0.1669\pm0.0054$ \\
STHTD & $6.53\times10^{-9}\pm6.43\times10^{-8}$ & $2.942\pm0.212$ & $0.0586\pm0.0264$ & $0.1770\pm0.0141$ \\
STHTD-MP & $3.97\times10^{-40}\pm2.09\times10^{-39}$ & $1.944\pm0.010$ & $0.0394\pm0.0162$ & $0.1687\pm0.0066$ \\
\bottomrule
\end{tabular}}
\end{table}

\subsection{Step-size robustness}\label{sec:robustness}

The above comparison reports each algorithm at its tuned step size. A practitioner, however, rarely has the budget to tune $\alpha$ as carefully as we do here. We therefore evaluate \emph{step-size robustness}: for each algorithm and each environment we fix the auxiliary-variable step size at $\beta=0.05$ and the TDRC regularizer at $1.0$, and sweep the primary step size $\alpha\in\{10^{-4}, 3\times10^{-4}, 10^{-3}, 3\times10^{-3}, 10^{-2}, 3\times10^{-2}, 10^{-1}\}$. Each $(\alpha, \text{algorithm}, \text{environment})$ cell is evaluated on 30 independent seeds, and we plot the resulting steady-state AUC (last-50\% time-average RMSVE) against $\alpha$ on log--log axes. A robust algorithm has both a low minimum AUC over the grid and a wide region of $\alpha$ for which the AUC stays close to that minimum. This is the same sensitivity protocol used by Chen et al.~\cite{chen2023mretrace} on the same counterexamples.

\begin{figure}[!t]
\centering
\begin{subfigure}[t]{0.49\textwidth}
\centering
\includegraphics[width=\textwidth]{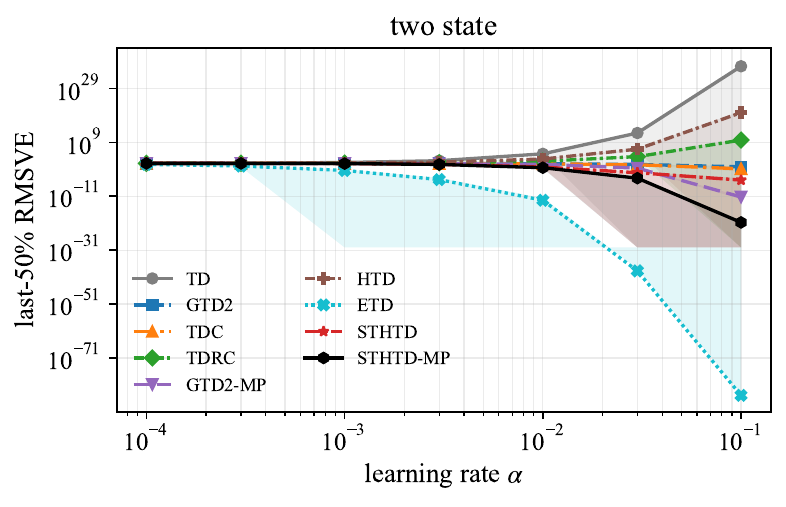}
\caption{Two-state counterexample.}
\label{fig:robustness_two_state}
\end{subfigure}\hfill
\begin{subfigure}[t]{0.49\textwidth}
\centering
\includegraphics[width=\textwidth]{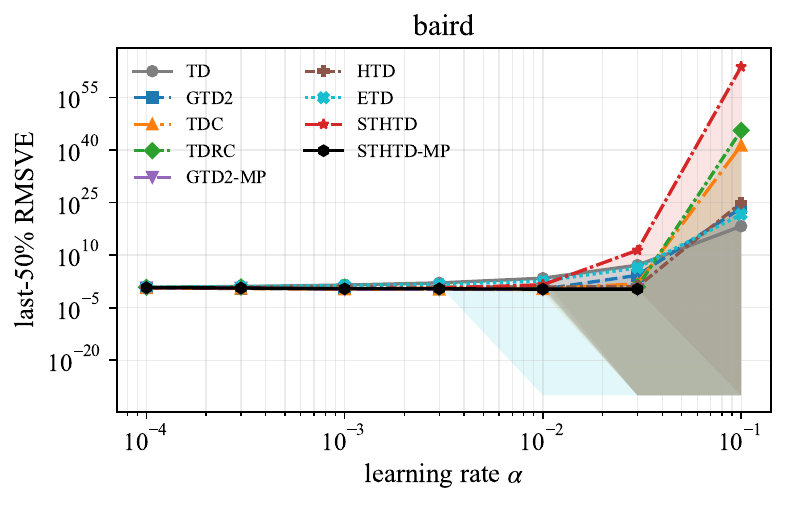}
\caption{Baird's counterexample.}
\label{fig:robustness_baird}
\end{subfigure}

\vspace{0.5em}

\begin{subfigure}[t]{0.49\textwidth}
\centering
\includegraphics[width=\textwidth]{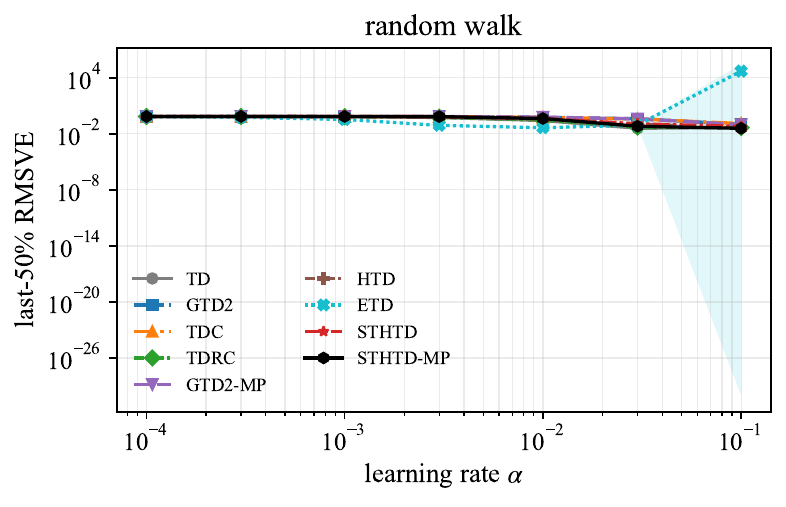}
\caption{Random Walk.}
\label{fig:robustness_random_walk}
\end{subfigure}\hfill
\begin{subfigure}[t]{0.49\textwidth}
\centering
\includegraphics[width=\textwidth]{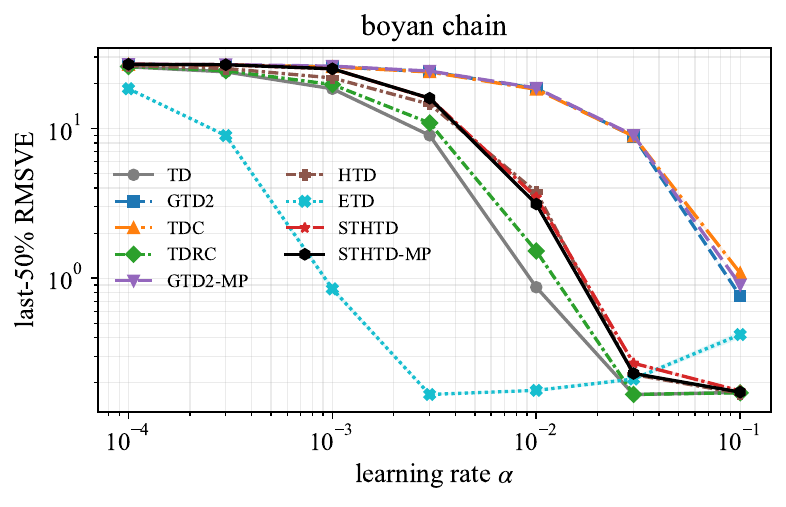}
\caption{Boyan Chain.}
\label{fig:robustness_boyan_chain}
\end{subfigure}
\caption{Step-size robustness. Each panel plots the steady-state AUC (last-50\% time-average RMSVE) versus the primary step size $\alpha$ over 30 independent seeds; shaded regions show $\pm$ one sample standard deviation. Both axes are logarithmic. A lower curve indicates better performance, and a flatter curve indicates greater robustness to step-size mis-specification.}
\label{fig:robustness}
\end{figure}

Figure~\ref{fig:robustness} reveals three patterns. \emph{First}, on the two-state counterexample, STHTD-MP is monotone in $\alpha$ and reaches $\sim2\times10^{-21}$ at $\alpha=10^{-1}$, beating GTD2-MP by roughly nine orders of magnitude across the entire grid. ETD has lower AUC than both at the largest tested $\alpha$ but exhibits high variance at small $\alpha$, consistent with the early-phase transient discussed earlier. \emph{Second}, on Baird's counterexample, STHTD-MP and GTD2-MP both bottom out near $1.94$ around $\alpha=10^{-2}$ and both fail at $\alpha=10^{-1}$ (with 23 and 15 out of 30 seeds diverging respectively); ETD diverges monotonically as $\alpha$ grows, which is consistent with the well-known observation that emphatic TD diverges on Baird~\cite{chen2023mretrace}. \emph{Third}, on Random Walk and Boyan Chain, STHTD-MP attains its minimum at $\alpha=10^{-1}$ and matches or beats the strong baselines TD, TDRC, and HTD there, while remaining bounded at every $\alpha$ where ETD blows up (e.g., Random Walk at $\alpha=10^{-1}$). The breadth of the low-AUC region for STHTD-MP is comparable to or wider than that of GTD2-MP on every benchmark, supporting the geometric claim that the behavior-induced metric yields a better-conditioned Mirror-Prox mean dynamics.

\section{Numerical Analysis}\label{sec:numerical_analysis}

The theoretical analysis identifies two mechanisms: Mirror-Prox reduces the deterministic structure error of the hybrid saddle-point operator, and the behavior-induced metric can improve the key matrix that governs the mean dynamics. The stochastic experiments verify the finite-sample behavior of the algorithms. This section connects the two by computing the exact finite-state matrices behind the benchmarks. In this way, the theoretical assumptions, the observed learning curves, and the numerical matrix properties form a single closed loop.

\subsection{Exact mean-rate comparison with GTD2-MP}

To test the key-matrix hypothesis and Corollary~\ref{cor:gtd2mp_compare}, we compute the exact finite-state matrices $A_\pi$, $C$, $A_\mu$, $H$, $K_C$, and $K_H$ for the four benchmarks. This is not a Monte Carlo simulation: the transition probabilities, stationary behavior distribution, and feature matrices are used directly to form the mean operators.

Table~\ref{tab:key_matrix_check} first verifies the structural hypothesis in Assumption~\ref{ass:key_matrix}. On the two-state, Random Walk, and Boyan Chain benchmarks, the hybrid key matrix $B_H=A_\pi^\top H^{-1}A_\pi$ has a larger smallest eigenvalue than $B_C=A_\pi^\top C^{-1}A_\pi$ and no worse conditioning. The improvement is especially clear on Random Walk and Boyan Chain, where the condition number decreases from $181.49$ to $18.71$ and from $63.14$ to $11.23$, respectively. Baird's counterexample does not satisfy the strict hypothesis because $A_\pi$ and the metric matrices are numerically singular in the over-parameterized feature representation.

\begin{table}[!t]
\centering
\caption{Verification of the key-matrix condition. Here $B_C=A_\pi^\top C^{-1}A_\pi$ is the GTD2 key matrix and $B_H=A_\pi^\top H^{-1}A_\pi$ is the hybrid key matrix. The factor is $(\kappa-1)/(\kappa+1)$, the best linear factor for the corresponding idealized quadratic gradient dynamics. Lower condition number and lower factor are better.}
\label{tab:key_matrix_check}
\resizebox{\textwidth}{!}{%
\begin{tabular}{lccccc}
\toprule
Environment & $\lambda_{\min}(B_C)$ / $\lambda_{\min}(B_H)$ & $\kappa(B_C)$ / $\kappa(B_H)$ & Factor $(C/H)$ & Condition & Interpretation \\
\midrule
Two-state & $0.0160$ / $0.0842$ & $1.00$ / $1.00$ & $0.000$ / $0.000$ & yes & stronger hybrid contraction \\
Baird & $\approx0$ / $\approx0$ & singular / singular & $1.000$ / $1.000$ & no & singular boundary case \\
Random Walk & $0.00475$ / $0.0275$ & $181.49$ / $18.71$ & $0.989$ / $0.899$ & yes & better conditioned hybrid key matrix \\
Boyan Chain & $0.00250$ / $0.0248$ & $63.14$ / $11.23$ & $0.969$ / $0.836$ & yes & better conditioned hybrid key matrix \\
\bottomrule
\end{tabular}}
\end{table}

After verifying this structural condition, we compute the Mirror-Prox spectral radii $q_C(\alpha)$ and $q_H(\alpha)$ in Eq.~\eqref{eq:q_metric}, both at the tuned learning rates used in the experiments and at the best learning rates over a common logarithmic grid. Table~\ref{tab:exact_rate_gtd2mp} gives these exact numerical results. On the two-state problem, the hybrid metric reduces the operator norm by 78.2\%, increases the Hurwitz margin from 0.0161 to 0.1094, and improves the best mean Mirror-Prox contraction factor from 0.9936 to 0.9026. On Boyan Chain, the operator norm is reduced by 17.3\%, the Hurwitz margin is improved by nearly five times, and the best contraction factor improves from 0.9975 to 0.9875. On Random Walk with $\gamma=0.99$, $\|K_H\|_2$ is larger than $\|K_C\|_2$, but the key matrix and Hurwitz margin are both more favorable, and the exact spectral radius still favors STHTD-MP. Therefore, for the three benchmarks that satisfy the key-matrix condition, the exact mean-operator analysis supports the claim that the proposed hybrid Mirror-Prox metric has a faster deterministic mean convergence factor than GTD2-MP.

Baird's counterexample has a different mean geometry. Its feature matrix has more features than states, and $A_\pi$ is numerically singular in this representation. The key-matrix condition fails and the exact spectral factors are essentially one for both metrics, so the deterministic mean-rate comparison is inconclusive. This matches the empirical behavior: Baird is an extreme off-policy case in which the behavior-induced metric does not yield the clear contraction advantage observed in the other three benchmarks.

\begin{table}[!t]
\centering
\caption{Exact deterministic mean-rate comparison between GTD2-MP and STHTD-MP. Here $C=\mathbb{E}_\mu[\phi\phi^\top]$ is the GTD2-MP metric and $H=\operatorname{sym}(A_\mu)$ is the STHTD-MP metric. $q(\alpha)=\rho(I-\alpha K+\alpha^2K^2)$ is the exact Mirror-Prox mean contraction factor. Lower $q$ is better.}
\label{tab:exact_rate_gtd2mp}
\resizebox{\textwidth}{!}{%
\begin{tabular}{lcccccc}
\toprule
Environment & $\|K_C\|_2$ & $\|K_H\|_2$ & Hurwitz margin $(C/H)$ & Tuned $q_C/q_H$ & Best $q_C/q_H$ & Conclusion \\
\midrule
Two-state & 2.516 & 0.548 & 0.0161 / 0.1094 & 0.9968 / 0.9786 & 0.9936 / 0.9026 & STHTD-MP faster \\
Baird & 2.912 & 1.796 & $\approx0$ / $\approx0$ & 1.0000 / 1.0000 & 1.0000 / 1.0000 & singular; inconclusive \\
Random Walk & 0.665 & 0.791 & 0.00489 / 0.0147 & 0.9990 / 0.9970 & 0.9951 / 0.9851 & STHTD-MP faster \\
Boyan Chain & 0.285 & 0.236 & 0.00252 / 0.0124 & 0.9995 / 0.9975 & 0.9975 / 0.9875 & STHTD-MP faster \\
\bottomrule
\end{tabular}}
\end{table}

\subsection{Why does STHTD struggle on Baird's counterexample?}

Baird's counterexample is an extreme off-policy setting. In our implementation, the target policy always selects the solid action, while the behavior policy selects it with probability $1/7$. Consequently, the importance ratio is $7$ for the solid action and $0$ for the dashed action. The $\theta$ block of STHTD is driven by this target-policy, importance-weighted direction, but the auxiliary block also contains behavior-induced hybrid terms involving $\phi_{t+1}$. In Baird's geometry, these two effects can be poorly aligned, producing oscillation and poor stability.

This diagnosis is also consistent with the exact mean-operator analysis in Table~\ref{tab:exact_rate_gtd2mp}. For Baird, both $C$ and $H$ lead to nearly zero Hurwitz margins because $A_\pi$ is numerically singular in the over-parameterized feature representation. The resulting mean-rate comparison is therefore inconclusive. The observed behavior reflects this singular off-policy geometry, where the behavior-induced metric has limited room to improve the covariance-metric dynamics.

\section{Related Work}

The instability of off-policy TD with function approximation was established in early theoretical and empirical studies \cite{baird1995residual,tsitsiklis1997analysis}. Importance sampling enables off-policy prediction but can introduce high-variance updates, especially under strong target-behavior mismatch \cite{precup2000eligibility}. Gradient-TD methods address the stability issue by introducing an auxiliary variable and optimizing projected Bellman-error objectives \cite{sutton2008convergent,sutton2009fast,maei2010gq}. Related control-oriented extensions show how these ideas connect to off-policy action-value learning, but they also introduce additional complications beyond fixed-policy prediction \cite{szepesvari2010toward}.

Two aspects of gradient-TD are especially relevant here. First, classical GTD2 and TDC use coupled primal and auxiliary recursions whose empirical behavior depends on relative step-size tuning. Finite-sample and stochastic-approximation analyses have clarified how two-timescale structure and Markovian noise affect such recursions \cite{dalal2020finite,kaledin2020finite,doan2021finite}. Second, the auxiliary metric changes the geometry of the saddle-point operator. Hybrid TD methods were motivated by faster learning through a mixture of TD and gradient-TD directions \cite{hackman2012faster}. Our work keeps this behavior-policy geometric ingredient but formulates it as a symmetric positive definite metric in a single-timescale saddle-point system.

Proximal gradient TD and related saddle-point formulations provide a natural route to single-timescale algorithms \cite{liu2015finite,liu2018proximal}. Mirror-Prox and stochastic extra-gradient methods provide a general mechanism for smooth monotone variational inequalities and convex-concave saddle-point problems \cite{nemirovski2004prox,juditsky2011solving}. Existing Mirror-Prox TD-style methods typically inherit the GTD2 covariance metric $C=\mathbb{E}_\mu[\phi\phi^\top]$. The distinction of STHTD-MP is that the extra-gradient step is applied after changing the metric to $H=\operatorname{sym}(A_\mu)$. Thus the contribution is not simply adding Mirror-Prox to TD, but changing the deterministic Mirror-Prox mean operator through behavior-induced geometry.

A line of recent work has further sharpened the analysis of single- and two-timescale gradient TD. Doan~\cite{doan2021finite} gives finite-time bounds for linear two-timescale stochastic approximation with restarting, while Dalal et al.~\cite{dalal2020finite} and Kaledin et al.~\cite{kaledin2020finite} establish finite-time and Markovian-noise analyses that apply to TDC-type recursions. TDRC introduces a regularized correction with a shared learning rate~\cite{ghiassian2020gradient}, which can be viewed as a single-timescale stabilization of TDC and serves as a direct baseline in our experiments. Hybrid TD methods~\cite{hackman2012faster} also exploit behavior-policy information, but as a mixture of TD and gradient-TD directions rather than as a positive definite metric, so the saddle-point reading of the speedup is absent. Relative to single-timescale proximal TD methods~\cite{liu2018proximal}, the distinguishing feature of STHTD-MP is the use of the symmetric part $H=\operatorname{sym}(A_\mu)$ of the behavior-induced Bellman matrix as the saddle-point metric, and the resulting key-matrix improvement made measurable in Section~\ref{sec:numerical_analysis}.

Recent reinforcement-learning work has also studied target networks, off-policy evaluation, and variance-reduced Markovian stochastic approximation \cite{uehara2020minimax,zhang2021breaking,wai2021variance}. These directions are complementary. Target-network and variance-reduction methods mainly control instability or sampling noise, whereas this paper isolates the effect of the saddle-point metric on linear off-policy prediction. The exact mean-operator comparison with GTD2-MP makes this geometric difference measurable in addition to the empirical learning curves.

\section{Discussion}

The present analysis focuses on fixed-policy off-policy prediction with linear function approximation. This setting exposes the role of the behavior-induced metric cleanly because $A_\mu$, $H$, and the saddle-point operator are fixed by the behavior policy and the feature map. Extending the same idea to control would require coupling the metric with changing target policies and bootstrapped action-value maximization. Nonlinear and deep function approximation would add another layer of dependence because the metric and the saddle-point operator would vary with the representation parameters.

The convergence and rate analyses describe two complementary views of the algorithm. The stochastic-approximation theorem establishes almost-sure stability for the non-projected linear recursion with diminishing step sizes. The projected variational-inequality bounds describe the structural effect of Mirror-Prox under an unbiased i.i.d. oracle model. The experiments use Markovian trajectories, so the theoretical rate results are paired with exact finite-state mean-operator calculations and trajectory-level statistics.

The exact GTD2-MP comparison is a deterministic mean-operator comparison for finite benchmarks. It gives a verifiable contraction factor and identifies when the behavior-induced metric improves the Mirror-Prox geometry. In finite samples, this mean effect interacts with variance, feature conditioning, and the strength of the off-policy mismatch, which explains why the empirical comparisons are reported with standard deviations across independent runs.

\section{Conclusion}

We presented STHTD-MP, a behavior-induced Mirror-Prox temporal-difference method for off-policy prediction with linear function approximation. Unlike GTD2-MP, which uses the feature covariance metric, STHTD-MP uses the symmetric part of the behavior-policy Bellman matrix as the saddle-point metric. This changes the deterministic Mirror-Prox mean operator rather than merely adding an extra-gradient step to an existing TD method.

The theoretical analysis establishes positive definiteness of the behavior-induced metric under standard finite-state assumptions, Hurwitz stability of the joint mean system, stochastic-approximation convergence, and ergodic gap bounds. More importantly, the exact mean-rate comparison shows how to compare STHTD-MP with GTD2-MP through the spectral radius of the deterministic Mirror-Prox error matrix. The numerical mean-operator analysis supports a faster deterministic contraction factor for STHTD-MP on the two-state, Random Walk, and Boyan Chain benchmarks, while Baird's counterexample is correctly identified as a singular boundary case where the strict comparison assumptions fail.

The stochastic experiments complement this analysis with finite-sample learning curves, standard deviations over 100 independent runs, and fair hyperparameter tuning. They show that STHTD-MP is competitive with strong online TD baselines, while also revealing that no single metric is uniformly best in every off-policy setting. Future work should develop adaptive behavior-induced metrics with theoretical guarantees and extend the analysis from fixed-policy prediction to control settings.

\section*{Acknowledgments}
This work was supported by the National Natural Science Foundation of China (Nos. 62276142, 62206133, 62202240, 62506172), the Fundamental Research Funds for the Central Universities (No. 2242025K30024), the Natural Science Foundation of Jiangsu Province (No. BK20250658), and the Scientific Research Start-up Foundation for Introduced Talents of Nanjing University of Posts and Telecommunications (No. NY225026).

\section*{Data and Code Availability}
The experimental code and configuration files are available in the public GitHub repository \texttt{GameAI-NJUPT/STHTD-MP}. The generated result tables are reproducible from the accompanying prediction-experiment script.

\bibliographystyle{elsarticle-num}
\bibliography{references}

\end{document}